%
%

\documentclass[11pt,a4paper]{article}
\usepackage[hyperref]{acl2021}
\usepackage{times}
\usepackage{latexsym}

\usepackage{enumitem}
\usepackage{amsthm}
\usepackage{amsfonts}
\newtheorem{definition}{Definition}
\newtheorem{theorem}{Theorem}
\newtheorem{corollary}{Corollary}
\newtheorem{lemma}{Lemma}
\usepackage{graphicx}
\usepackage{caption}
\usepackage{subcaption}
\usepackage{multirow}
\usepackage{booktabs}
\usepackage{amsfonts}
\usepackage{amsmath} 
\usepackage{bm}  

\usepackage{microtype}
\usepackage{amsmath,amssymb}
\newcommand{\seq}[1]{\boldsymbol{#1}}
\newcommand{\x}{\boldsymbol{x}}
\newcommand{\y}{\boldsymbol{y}}
\aclfinalcopy 


\usepackage{color} 
\definecolor{mypink2}{RGB}{219, 48, 122}
\definecolor{jrcolor}{RGB}{100, 150, 225}
\definecolor{jrcomment}{RGB}{70, 200, 150}

\title{Data Augmentation for Text Generation Without Any Augmented Data}

\author{Wei Bi\footnotemark[1]~~~Huayang Li\thanks{~~Equal contribution.}~~~Jiacheng Huang \\
        Tencent AI Lab, Shenzhen, China\\
        \tt \{victoriabi,alanili,eziohuang\}@tencent.com 
}

\date{}

\begin{document}
\maketitle
\begin{abstract}
Data augmentation is an effective way to improve the performance of many neural text generation models. However, current data augmentation methods need to define or choose proper data mapping functions that map the original samples into the augmented samples. In this work, we derive an objective to formulate the problem of data augmentation on text generation tasks without any use of augmented data constructed by specific mapping functions. Our proposed objective can be efficiently optimized and applied to popular loss functions on text generation tasks with a convergence rate guarantee. Experiments on five datasets of two text generation tasks show that our approach can approximate or even surpass popular data augmentation methods.
\end{abstract}

\section{Introduction}

End-to-end neural models are generally trained in a data-driven paradigm.
Many researchers have proposed powerful network structures to fit training data well. 
It has also become ubiquitous to increase the training data amount to improve model performance. Data augmentation is an effective technique to create additional samples in both vision and
text classification tasks~\cite{perez2017effectiveness,shorten2019survey,wei2019eda}, which perturb samples without changing their labels.
For text generation tasks, there can be more types of data perturbation to construct augmented samples, including corrupting the input text~\cite{xie2019data}, the output text~\cite{norouzi2016reward,kurata2016labeled}, or both~\cite{zhang2020dialogue}. As such,
classification tasks can be regarded as special cases of generation tasks in terms of incorporating data augmentation techniques, and this work mainly discusses text generation tasks.

The focus of previous work on text data augmentation has been to design proper augmentation techniques to create augmented samples.
Some augmentation methods have been proposed for general text tasks.
For example, different general replacement operations have been explored to edit words in a text sample, 
ranging from simple look-up tables~\cite{zhang2015character} to pretrained masked language models~\cite{kobayashi2018contextual,wu2019conditional}.
\citet{sennrich-etal-2016-improving} propose to augment text sequences by back-translation.
For some generation tasks such as dialogue generation, general augmentation methods may not yield stable improvements and it requires to carefully incorporate the task property to design useful augmented samples~\cite{zhang2020dialogue}.
All these methods need to explicitly construct augmented samples, and the data mapping functions from the original samples to the augmented samples are mostly defined apriori.
This motivates us to raise a question, whether we can skip the step to define or choose proper augmented data mapping functions to accomplish effective data augmentation. 

To answer this question, we aim to formulate the problem of data augmentation for general text generation models without any use of augmented data mapping functions.
We start from a conventional data augmentation objective, which is a weighted combination of loss functions associated with the original and augmented samples.
We show that the loss parts of the augmented samples can be re-parameterized by variables not dependent on the augmented data mapping functions, if a simple Euclidean loss function between the sentence representations is applied. 
Based on this observation, we propose to directly define a distribution on the re-parameterized variables. 
Then we optimize the expectation of the augmented loss parts over this distribution to approximate the original augmented loss parts computed with various augmented data mapping functions. 
We make different assumptions on the variable distributions and 
find that our proposed objective can be computed and optimized efficiently by simple gradient weighting.
If stochastic gradient descent (SGD) is used, our objective is guaranteed with the convergence rate $O(1/\sqrt{T})$.   
Our objective can be coupled with popular loss functions on text generation tasks, including the word mover's distance~\cite{kusner2015word} and the cross-entropy loss. 

Our approach, which utilizes the proposed objective and optimizes it by SGD,
has two advantages. 
First, it provides a unified formulation of various data perturbation types in general text generation models, which sheds a light on understanding the working mechanism of data augmentation.
Second, the optimization of our approach is simple and efficient. Without introducing any new sample during training, we can avoid additional calculation efforts on augmented samples, often with the total size much larger than the original data size.
Hence, our approach maintains high training efficiency.

Extensive experiments are conducted to validate the effectiveness of our approach. We mainly use the LSTM-based network structure~\cite{bahdanau2014neural, luong2015effective} and perform experiments on two text generation tasks - neural machine translation and single-turn conversational response generation. Results on five datasets demonstrate that the proposed approach can approximate or even surpass popular data augmentation methods such as masked language model~\cite{devlin2019bert} and back-translation~\cite{sennrich-etal-2016-improving}.


\section{Related Work}
Data augmentation has shown promising improvements on neural models for different text generation tasks such as language modeling~\cite{xie2019data}, machine translation~\cite{sennrich-etal-2016-improving} and dialogue generation~\cite{niu2019automatically,cai2020}.
Existing text data augmentation methods can be mainly categorized into word-level augmentation and sentence-level augmentation.

Word-level augmentation methods perturb words within the original sentence.
Common operations include word insertion and deletion~\cite{wei2019eda}, synonym replacement~\cite{zhang2015character}, and embedding mix-up~\cite{guo2019augmenting}.
Masked language models can be used by masking some percentages of tokens at random, and predicting the masked
words based on its context~\cite{wu2019conditional,cai2020}.

Sentence-level data augmentation is not limited to edit only a few words in the original sentence, but to generate a complete sentence.
For example, back-translation is originally proposed to translate monolingual target language data into source language to augment training pairs in machine translation~\cite{sennrich-etal-2016-improving}. It is later extended to paraphrase sentences in any text dataset, in which two translation models are applied: one
translation model from the source language to target language and another from the target to the source.
GAN-based and VAE-based models have also achieved impressive results to create entire sentences to augment the training data~\cite{hu2017toward,cheng2019robust}.
For dialogue generation, retrieved sentences can be good supplement of the original corpus~\cite{zhang2020dialogue}.

Both word-level and sentence-level augmentation methods need to define their augmented data mapping functions (i.e. operations to edit words or models to generate sentences) apriori. Some works train policies to sample a set of word-level operations~\cite{niu2019automatically}, but the operation candidates are still pre-defined. 
A few works learn to construct augmented samples and optimize the network jointly~\cite{hu2019learning,cai2020}.
Different from previous work, our goal is not to propose or learn novel augmented data mapping functions. Instead, we investigate whether the effectiveness of data augmentation can be achieved while we do not bother to use any specific augmented data mapping function. 

Besides data augmentation, data weighting is another useful way to improve model learning. It assigns a weight to each sample to adapt its importance during training. The sample weights are often carefully defined~\cite{freund1997decision,bengio2009curriculum} or learnt by another network~\cite{jiang2018mentornet,shu2019meta}. Data augmentation is often combined with data weighting together to weight the original and augmented samples. 

\section{Background}
\label{sec:back}
We are given original samples $\mathcal{D}=\{(\x,\y)\}$ with $\x,\y$ both as text sequences. Without loss of generality, a deep generation model is to learn a mapping function $f_{x,y}$ by a deep neural network that outputs $\y$ given $\x$. 
As mentioned in the introduction, text generation tasks mainly have three types of augmented data: 
\begin{itemize}[wide=0\parindent,noitemsep, topsep=0pt]
\item one (or several) perturbed input text $\hat{\x}$ by one (or several) augmented data mapping function $\phi_{\hat{x}}$;
\item one (or several) perturbed output text $\hat{\y}$ by one (or several) augmented data mapping functions $\phi_{\hat{y}}$;
\item one (or several) perturbed paired text $(\hat{\x}, \hat{\y})$ by corresponding augmented data mapping functions.
\end{itemize}
%
Proper augmented data mapping functions are often supposed to generate perturbed sequences or sequence pairs that are close to the original one. They are assumed to be given apriori in optimizing the generation model for now. 

Let $\ell(f_{x,y}(\x), \y)$ denote the loss function to be minimized for each sample.
We first use augmented data in the input domain as an example to present the problem formulation and introduce our approach, then later discuss other types of augmented data.
Data augmentation methods generally apply an augmented loss per sample with its augmented samples:

\begin{equation}
\ell_{aug} = \ell (f_{x,y}(\x), \y) + \sum_{\hat{x}: \phi_{\hat{x}} \in \mathcal{F}} w_{\hat{x}} \ell (f_{x,y}(\hat{\x}), \y) \label{eq:obj}
\end{equation}
where $w_{\hat{x}}$ is the importance weight
associated with each augmented sample, $\phi_{\hat{x}}$ is the augmented data mapping function that constructs $\hat{x}$, and $\mathcal{F}$ is the function space containing all feasible augmented data mapping functions.

\section{Our Approach}
\label{sec:model}
In this section, we aim to formulate the problem of data augmentation for general text generation models without any use of augmented data mapping functions.
We introduce our approach by assuming that the loss function $\ell$ is the most simple Euclidean distance, i.e. 
\begin{eqnarray}
\ell(\seq{u},\seq{v}) = \|\seq{u} - \seq{v}\|_2 \label{eq:l2}
\end{eqnarray}
where $\seq{u}$ and $\seq{v}$ are the sentence representations of two sentences, i.e. the target sequence and the predicted sequence. 
Other conventional loss functions in text generation will be discussed in Section~\ref{sec:loss}.

We first rewrite each loss part of an augmented data point in (\ref{eq:obj}) from a polar coordinate system in Sec~\ref{sec:obj}.
In this way, we can regard the total augmented loss part with multiple augmented data mapping functions as sampling different points in the polar coordinate system. 
This inspires us that we can skip to define any augmented data mapping function, but only design a joint distribution of the perturbation radius and perturbation angle in the polar coordinate system. 
In Sec~\ref{sec:opt}, we show two probability distribution substantiations, and find that our approach can be optimized efficiently by simply re-weighting the gradients. 
In Sec~\ref{sec:types}, we discuss the extension of our approach for other augmented data mapping function types. 

\subsection{Proposed Objective}
\label{sec:obj}

\begin{figure*}
     \centering
     \begin{subfigure}[b]{0.3\textwidth}
         \centering
         \includegraphics[width=\textwidth]{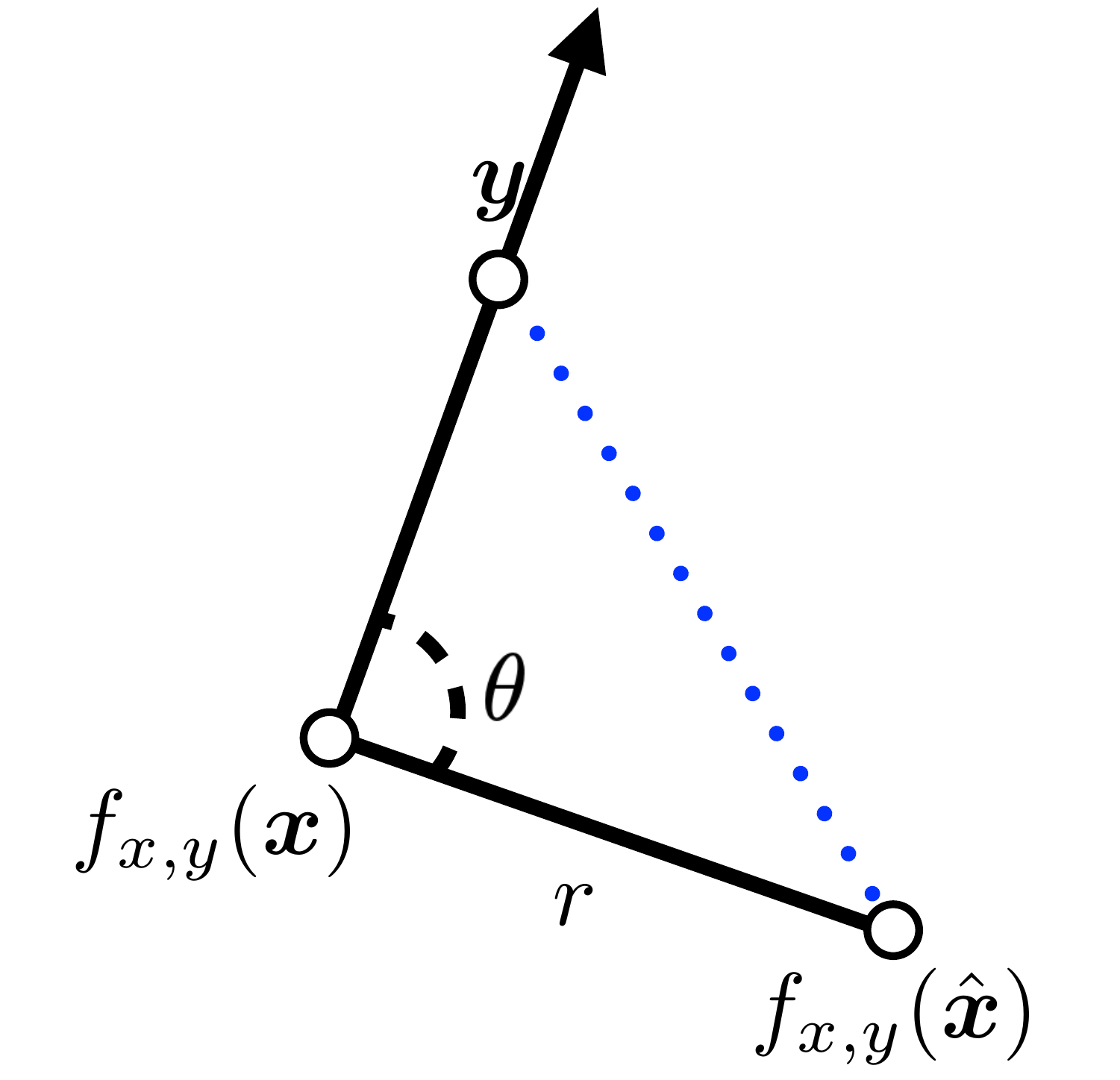}
         \caption{with a perturbed input $\hat{\x}$}
         \label{fig1}
     \end{subfigure}
     \hfill
     \begin{subfigure}[b]{0.3\textwidth}
         \centering
         \includegraphics[width=\textwidth]{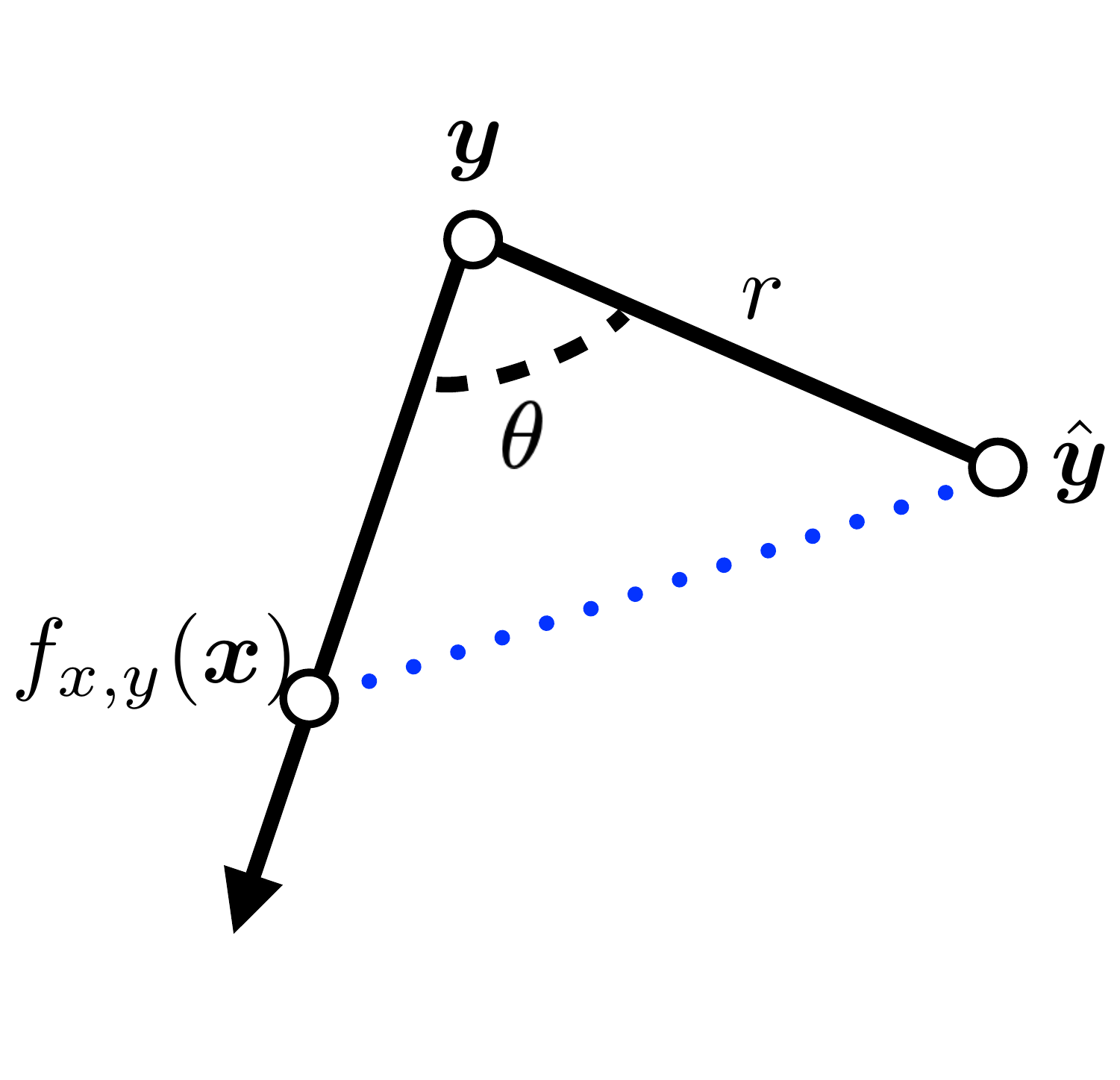}
         \caption{with a perturbed output $\hat{\y}$}
         \label{fig2}
     \end{subfigure}
     \hfill
     \begin{subfigure}[b]{0.3\textwidth}
         \centering
         \includegraphics[width=\textwidth]{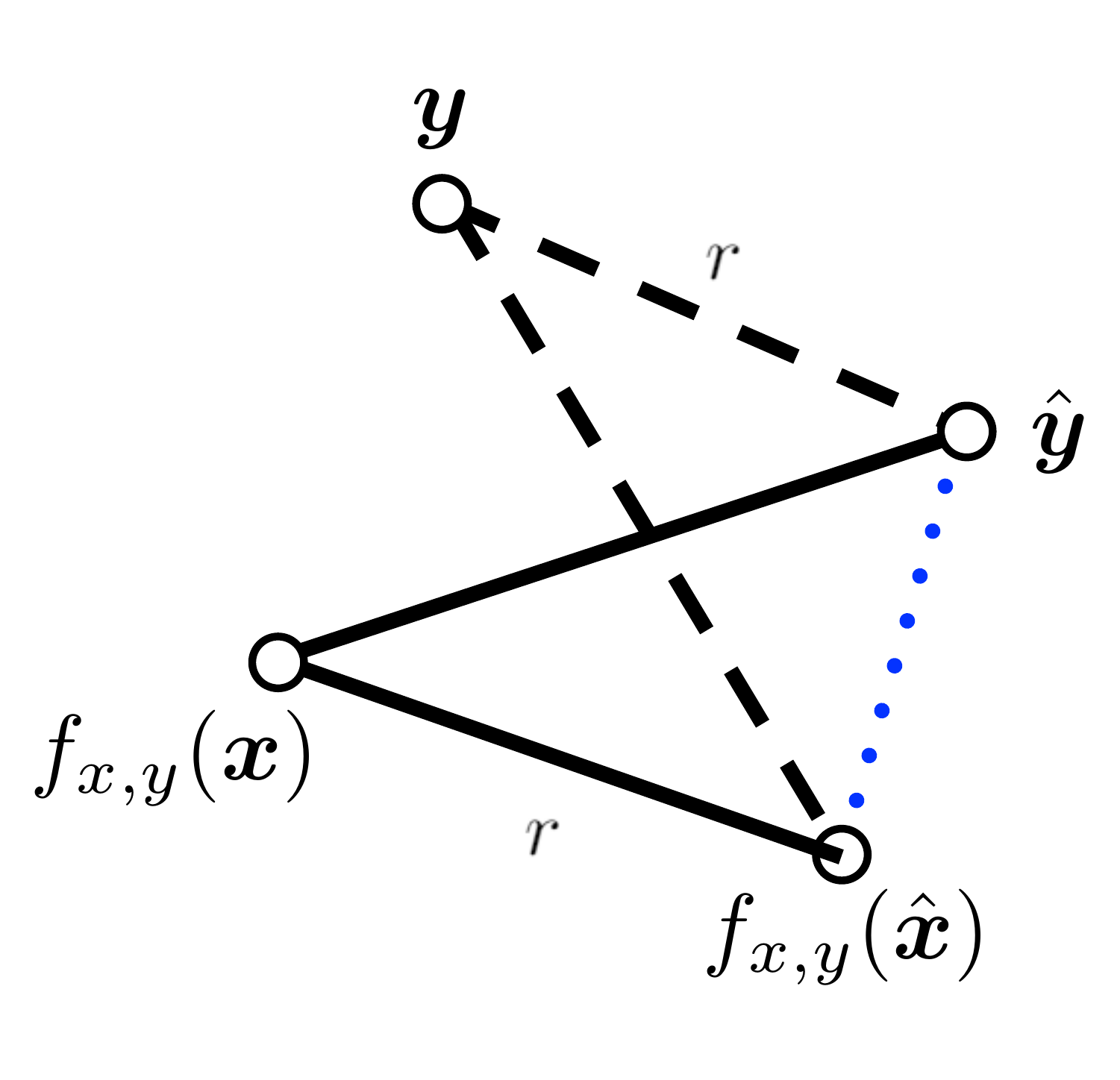}
         \caption{with a perturbed paired text $(\hat{\x},\hat{\y})$}
         \label{fig3}
     \end{subfigure}
        \caption{Illustration of the polar coordinate systems for three kinds of data perturbation. Rays in the figures are the polar axes. Our approach expresses edges in dots by their corresponding polar coordinates.}
    
\end{figure*}


By treating $f_{x,y}(\x), f_{x,y}(\hat{\x})$ and $\y$ as three vertices in the Euclidean space, we can form a triangle (illustrated in Fig.~\ref{fig1}) with the three vertices and the loss between them as edges. 
For a given augmented data mapping function $\phi_{\hat{x}}$ and a sample $(\x, \y)$, 
we can rewrite $\ell (f_{x,y}(\hat{\x}), \y)$ using the polar coordinate system with $f_{x,y}(\x)$ as the pole and $(f_{x,y}(\x), \y)$ as the polar axis:
\begin{eqnarray}
& \lefteqn{\ell^2 (f_{x,y}(\hat{\x}), \y) =} \nonumber\\
&&  \ell^2(f_{x,y}(\x), \y) + \ell^2 (f_{x,y}(\x), f_{x,y}(\hat{\x})) \nonumber\\
&& - 2\ell (f_{x,y}(\x), f_{x,y}(\hat{x}))\ell (f_{x,y}(\x), \y)\cos\theta \nonumber\\ \label{eq:tri}
\end{eqnarray}
where $\theta$ is the radian of $f_{x,y}(\hat{\x})$.
We can observe that, the rewritten augmented sample loss part depends on the original sample loss $\ell(f_{x,y}(\x), \y)$ as well as the radius $r$ and radian $\theta$  of $f_{x,y}(\hat{\x})$. Here $r$ is the data perturbation distance $\ell(f_{x,y}(x), f_{x,y}(\hat{x}))$.
%
Therefore, we can map each augmented data mapping function $\phi_{\hat{x}} \in \mathcal{F}$ into $(r, \theta) \in P$, where 
$P$ is a joint distribution of $(r, \theta)$
~\footnote{
It is worth pointing out that even if 
the three vertices (i.e.,
$f_{x,y}(\hat{\x})$, $\y$, and $f_{x,y}(\x)$ ) lie in high dimensional spaces,
we can always use the distribution of $(r, \theta)$ cover all possible triangles formed by them.
And our derivation will not lose its generalization in high dimensional spaces, since we does not make use of the vertices but only edges of the triangles.}.
A weighted summation of the augmented loss parts from different augmented data mapping functions can be seen as an empirical estimation of the expectation of the rewritten loss by sampling different  $(r, \theta)$'s from their joint distribution $P$, though the corresponding ground truth $P$ is not observed.

This inspires us how to avoid to specifically design or choose several augmented data mapping functions and their weights used in (\ref{eq:obj}). We can directly design the distribution $P$ of $(r,\theta)$ and optimize the expectation of the rewritten loss (i.e. the right hand side in (\ref{eq:tri})) under this distribution. 
Hence, we propose to optimize the following objective to mimic the effect of data augmentation:
%
\begin{equation}
\ell_{our}\! =\!\ell (f_{x,y}(\x), \y) \!+\! \mathbb{E}_{ (r,\theta) \in P}[ \Phi(\ell (f_{x,y}(\x), \y))] \!\!\!\!\label{eq:ours}
\end{equation}
where $\Phi(e;r,\theta)$ is a function of an edge $e$ in the loss function space given $(r,\theta)$:
\begin{eqnarray}
\Phi(e;r,\theta) = \sqrt{e^2 + r^2 - 2er\cos\theta}.
\end{eqnarray}

\subsection{Optimization}
\label{sec:opt}
We design specific distributions of $(r, \theta)$ used in the proposed objective (\ref{eq:ours}) and their optimization.
We assume the two variables are independent:
\begin{eqnarray}
p(r,\theta)=p(r)p(\theta).
\end{eqnarray}
In the following corollary, we first show the result by assuming that both $r$ and $\theta$ follow uniform distributions. Recall that proper data mapping functions augment samples close to the original one. An ideal case is thus to perturb samples with their output representations uniformly surrounding that of the original sample. The uniform distribution with a small perturbation radius upper bound $R$ can simulate this ideal case.

\begin{corollary}
We are given the perturbation distance upper bound $R$ and assume that 
\begin{eqnarray}
r \sim \mathcal{U}(0,R),\theta \sim \mathcal{U}(0,\pi).
\end{eqnarray}
$\mathbb{E}_{ (r,\theta) \in P}[\Phi(\ell (f_{x,y}(\x), \y))]$ is upper bounded by $\frac{1}{2} \ell (f_{x,y}(\x), \y) + C_1\cdot\ell^2 (f_{x,y}(\x), \y) + C_2(R)$, where $C_1$ is a constant and $C_2(R)$ is another constant dependent on $R$.
\end{corollary}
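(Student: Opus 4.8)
The plan is to collapse the statement to a one-variable inequality. Write $e := \ell(f_{x,y}(\x),\y)\ge 0$; by the definition of $\Phi$ the quantity to be bounded is
\begin{equation*}
g(e) := \mathbb{E}_{r\sim\mathcal{U}(0,R),\,\theta\sim\mathcal{U}(0,\pi)}\Big[\sqrt{e^2 + r^2 - 2er\cos\theta}\Big],
\end{equation*}
and we must show $g(e)\le \tfrac12 e + C_1 e^2 + C_2(R)$. Since everything factors through the two scalar moments $\mathbb{E}[r]=R/2$, $\mathbb{E}[r^2]=R^2/3$ and through elementary integrals of $\cos\theta$ over $[0,\pi]$, the whole proof amounts to bounding the square root by an expression that is at most quadratic in $e$ and $r$, then integrating.

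First I would split the $\theta$-integral at $\pi/2$ according to the sign of $\cos\theta$. On the half $\theta\in[\pi/2,\pi]$ we have $\cos\theta\le 0$, hence $-2er\cos\theta\le 2er$ and $\sqrt{e^2+r^2-2er\cos\theta}\le\sqrt{(e+r)^2}=e+r$ (just the triangle inequality for the side opposite $\theta$); integrating over this half and then over $r$ contributes $\tfrac12(e+R/2)$, which is exactly where the $\tfrac12 e$ of the statement comes from. On the half $\theta\in[0,\pi/2]$ we have $\cos\theta\ge 0$, so $-2er\cos\theta\le 0$ and $\sqrt{e^2+r^2-2er\cos\theta}\le\sqrt{e^2+r^2}$; here I would linearize with the AM--GM inequality $\sqrt{u}\le \tfrac{u}{2c}+\tfrac{c}{2}$, valid for any fixed constant $c>0$, applied to $u=e^2+r^2$, which turns this half into $\tfrac{e^2+r^2+c^2}{4c}$ after integrating over $\theta$ and then into $\tfrac{1}{4c}e^2 + \tfrac{R^2/3+c^2}{4c}$ after taking the expectation over $r$.

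Adding the two halves yields precisely $g(e)\le \tfrac12 e + C_1 e^2 + C_2(R)$ with $C_1=\tfrac{1}{4c}$ (a genuine constant once $c$ is fixed, e.g.\ $c=1$) and $C_2(R)=\tfrac{R}{4}+\tfrac{R^2}{12c}+\tfrac{c}{4}$, which depends only on $R$. No separate treatment of $e=0$ is needed, since this argument never divides by $e$.

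I do not expect a real obstacle; the only point requiring a moment of thought is choosing how to bound the square root so that (i) the cross term in $\cos\theta$ is controlled — splitting at $\pi/2$ rather than invoking $\int_0^\pi\cos\theta\,d\theta=0$ keeps the bound valid on the ``near'' half while producing the clean $\tfrac12 e$ on the ``far'' half — and (ii) no term growing faster than $e^2$ creeps in, which is why one should not bound $\sqrt{e^2+r^2}$ by the cruder $e+r$ on the second half. A slightly stronger alternative is to apply $\sqrt{u}\le\tfrac{u}{2c}+\tfrac{c}{2}$ to the full integrand and use $\mathbb{E}_\theta[\cos\theta]=0$ to kill the linear term entirely, but the split above is the most direct route to exactly the stated form.
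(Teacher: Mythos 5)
Your proof is correct and follows essentially the same route as the paper's: split the $\theta$-integral at $\pi/2$, bound the far half by $e+r$ and the near half by $\sqrt{e^2+r^2}$, then linearize the latter via $\sqrt{u}\le \frac{u}{2c}+\frac{c}{2}$ (the paper takes $c=1$) and integrate over $r$. Your constants $C_1=\frac{1}{4}$ and $C_2(R)=\frac{R^2}{12}+\frac{R}{4}+\frac{1}{4}$ match the paper's exactly.
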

\noindent
Proof is in the Appendix. With the above result, we can optimize the objective in (\ref{eq:ours}) by minimizing the derived upper bound.  We calculate its gradient:
\begin{eqnarray}
\frac{\partial{\ell_{our}}}{\partial{\Theta}} =
\frac{3}{2}\cdot\frac{\partial{\ell(\Theta)}}{\partial{\Theta}} + 2  C_1 \cdot\ell(\Theta) \frac{\partial{\ell(\Theta)}}{\partial{\Theta}} \label{eq:grad}
\end{eqnarray}
where $\Theta$ contains all neural model parameters.
It can be observed that the major difference of the above gradient compared with the original one of the objective in (\ref{eq:obj}) lies in the second part of (\ref{eq:grad}), which weights the original gradient by the loss value.
This means that the performance improvement brought by data augmentation under our formulation can be equivalently accomplished by
specialized data weighting.
Indeed, many data weighting methods~\cite{lin2017focal} favors hard examples by reducing the gradient contribution from easy examples and increasing the importance of hard examples (example with large loss value in our approach), which significantly boost the performance.  This in turn shows that simple uniform distributions assumed here should be reasonable and effective.


Instead of uniform distribution, we can assume a uniform distribution on $\theta$ but an exponential distribution on $r$ such that a small perturbation distance is preferred with a higher probability.

\begin{corollary}
We are given the expected value of the perturbation distance as $R$ and assume that 
\begin{eqnarray}
r \sim  \mbox{Exp}(\frac{1}{R}),\theta \sim \mathcal{U}(0,\pi).
\end{eqnarray}
$\mathbb{E}_{ (r,\theta) \in P}[\Phi(\ell (f_{x,y}(\x), \y))]$ is upper bounded by $C_1(R) \cdot \ell (f_{x,y}(\x), \y) + \frac{C_1(R)}{2} \cdot \ell^2 (f_{x,y}(\x), \y) + C_2(R)$, where $C_1(R)$ and $C_2(R)$ are constants dependent on $R$.
\end{corollary}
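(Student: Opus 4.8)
The plan is to reduce the expectation to a pointwise bound on the scalar function $\Phi$ and then integrate out $\theta$ and $r$ in turn, exactly along the lines of the proof of Corollary~1; the only quantity that changes is the moment information about $r$. Abbreviate $e=\ell(f_{x,y}(\x),\y)\ge0$, so the object to be bounded is $\mathbb{E}_{(r,\theta)}\!\left[\sqrt{e^2+r^2-2er\cos\theta}\,\right]$ with $r\sim\mathrm{Exp}(1/R)$ and $\theta\sim\mathcal{U}(0,\pi)$ independent.

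The key step is to replace the square root by an affine function of the radicand. I would use the AM--GM inequality $\sqrt{x}\le\frac{x}{2t}+\frac{t}{2}$, which holds for every $x\ge0$ and every fixed $t>0$; applied with $x=e^2+r^2-2er\cos\theta=\Phi^2\ge0$ it holds pointwise in $(r,\theta)$ and hence survives taking expectations, giving $\Phi\le\frac{1}{2t}\bigl(e^2+r^2-2er\cos\theta\bigr)+\frac{t}{2}$. A variant that reproduces the explicit $\frac12 e$ term of Corollary~1 is to split the $\theta$-integral at $\pi/2$: on $[0,\pi/2]$ the cosine term is nonnegative so $\Phi\le\sqrt{e^2+r^2}$, while on $[\pi/2,\pi]$ one has $\Phi\le\sqrt{e^2+r^2+2er}=e+r$, and $\sqrt{e^2+r^2}$ is then again bounded by AM--GM; I would keep whichever variant makes the constants line up most cleanly.

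Next I would integrate. Taking the expectation over $\theta\sim\mathcal{U}(0,\pi)$ kills the cross term, since $\frac1\pi\int_0^\pi\cos\theta\,d\theta=0$, leaving $\mathbb{E}_\theta[\Phi]\le\frac{1}{2t}(e^2+r^2)+\frac{t}{2}$. Taking the expectation over $r\sim\mathrm{Exp}(1/R)$ then needs only the first two moments, $\mathbb{E}[r]=R$ and $\mathbb{E}[r^2]=2R^2$ --- this is precisely the single place where the exponential assumption differs from the uniform one of Corollary~1, for which $\mathbb{E}[r^2]=R^2/3$. Substituting yields $\mathbb{E}_{(r,\theta)}[\Phi]\le\frac{1}{2t}e^2+\bigl(\frac{R^2}{t}+\frac{t}{2}\bigr)$, and since $e\ge0$ one may add the nonnegative term $\frac{1}{t}e$ to the right-hand side. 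Choosing $t$ as a function of $R$ (for instance $t=R$) and setting $C_1(R)=\frac{1}{t}$, $C_2(R)=\frac{R^2}{t}+\frac{t}{2}$ then gives exactly the claimed bound $C_1(R)\,e+\frac{C_1(R)}{2}\,e^2+C_2(R)$.

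All the calculations are elementary; the one point that needs a little care is that one must not linearize $\sqrt{e^2+(\cdots)}$ around $e$, because the resulting correction term is $\propto1/e$ and blows up as $e\to0$, whereas the bound has to hold uniformly for every $e\ge0$ --- this is exactly why a global bound on the square root (AM--GM, or the $\theta$-splitting device) is used in place of a Taylor expansion. Beyond that, the only work is the bookkeeping that forces the coefficient of $e^2$ to come out as exactly half the coefficient of $e$.
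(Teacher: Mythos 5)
Your proposal is correct and follows essentially the same route as the paper's proof: bound the square root by an affine function of its radicand (via AM--GM, with the split of the $\theta$-integral at $\pi/2$ being exactly the paper's device), then integrate out $\theta$ and $r$. The only real difference is cosmetic --- your primary variant lets the uniform $\theta$-average annihilate the cross term and uses the exponential moments $\mathbb{E}[r]=R$, $\mathbb{E}[r^2]=2R^2$ over the full support $[0,\infty)$, whereas the paper splits $\theta$ first and then evaluates the $r$-integral explicitly (and, somewhat sloppily, truncates it to $[0,R]$), so your constants $C_1(R),C_2(R)$ come out different from the paper's but the bound has the same claimed form.
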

\noindent
Proof is in the Appendix. The above corollary shows that even if different distributions are assumed, we can still use gradient weighting to optimize the proposed objective, where $C_1(R)$ can be set as a hyper-parameter.


If the loss is Lipschitz smooth, of which Euclidean distance is the case, we can prove the convergence of our approach with the convergence rate $O(1/\sqrt{T})$, if SGD is used. The proof is provided in the Appendix, which is extended from results in~\citet{reddi2016stochastic}.
\begin{theorem} \label{thm1}
Suppose $\ell_{our}$ is in the class of finite-sum Lipschitz smooth functions, has $\delta$-bounded gradients, and the weight of the loss gradient is clipped to be bounded by $[w_1, w_2]$. Let the learning rate of SGD $\alpha_t=c/\sqrt{T}$ where $c= \sqrt{\frac{2(\ell_{our}(\Theta^0)-\ell_{our}(\Theta^*))}{L\sigma^2w_1w_2}}$ where $L$ is the Lipschitz constant and $\Theta^*$ is an optimal solution. Then the iterates of SGD of our approach with $\ell_{our}$ satisfy:
\begin{eqnarray}
  &  \lefteqn{\min_{0\le t\le T-1}\mathbb{E}[||\nabla \ell_{our}(\Theta^t)||^2] \leq} \nonumber\\ 
  && \sqrt{\frac{2(\ell_{our}(\Theta^0)-\ell_{our}(\Theta^*))Lw_1}{Tw_2}}\sigma. 
\end{eqnarray}
\end{theorem}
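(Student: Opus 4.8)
The plan is to adapt the nonconvex SGD convergence analysis of \citet{reddi2016stochastic} to the reweighted update used here; the only genuinely new ingredient is that the step direction is a stochastic gradient scaled by a factor clipped to lie in $[w_1,w_2]$. Write the iteration as $\Theta^{t+1}=\Theta^t-\alpha_t v_t$, where $v_t$ is the clipped-weight minibatch gradient used at step $t$. First I would invoke the $L$-Lipschitz smoothness of $\ell_{our}$ to get the descent inequality $\ell_{our}(\Theta^{t+1})\le \ell_{our}(\Theta^t)-\alpha_t\langle\nabla\ell_{our}(\Theta^t),v_t\rangle+\tfrac{L\alpha_t^2}{2}\|v_t\|^2$, and then take the expectation conditioned on $\Theta^t$.

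In that conditional expectation the first-order term is bounded below using the weight's lower clip $w_1$ together with unbiasedness of the base minibatch gradient, while the second-order term is bounded above using the $\delta$-bounded-gradient assumption (which caps the stochastic gradient's second moment, the quantity appearing as $\sigma^2$) and the weight's upper clip $w_2$. Since $\alpha_t=c/\sqrt{T}$ is small for large $T$, the $\|\nabla\ell_{our}(\Theta^t)\|^2$ contribution that the quadratic term produces is dominated by the first-order descent and is absorbed. Rearranging then leaves a per-step inequality of the shape $\mathbb{E}\|\nabla\ell_{our}(\Theta^t)\|^2\le \tfrac{a}{\alpha_t}\big(\mathbb{E}\,\ell_{our}(\Theta^t)-\mathbb{E}\,\ell_{our}(\Theta^{t+1})\big)+b\,\alpha_t$, with the clips $w_1,w_2$ and the constants $L,\sigma^2$ entering $a$ and $b$ in opposite directions.

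Summing this over $t=0,\dots,T-1$, telescoping the $\ell_{our}$ differences, using $\mathbb{E}\,\ell_{our}(\Theta^T)\ge \ell_{our}(\Theta^*)$, dividing by $T$, and bounding $\min_{0\le t\le T-1}\mathbb{E}\|\nabla\ell_{our}(\Theta^t)\|^2$ by the average over $t$ yields a bound of the form $\tfrac{a\,(\ell_{our}(\Theta^0)-\ell_{our}(\Theta^*))}{c\sqrt{T}}+\tfrac{b\,c}{\sqrt{T}}$. The prescribed $c=\sqrt{2(\ell_{our}(\Theta^0)-\ell_{our}(\Theta^*))/(L\sigma^2 w_1w_2)}$ is exactly the value of $c$ that minimizes this two-term expression (it equalizes the two terms), so substituting it collapses the bound to $\sqrt{2(\ell_{our}(\Theta^0)-\ell_{our}(\Theta^*))Lw_1/(Tw_2)}\,\sigma$, which is the claim.

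The step I expect to be the main obstacle is the conditional-expectation bound under the clipped reweighting: because the scaling factor applied to the stochastic gradient is itself a clipped function of the loss on the sampled minibatch, it cannot simply be factored out of the expectation, so one must use the pointwise bounds $w_1\le(\text{weight})\le w_2$ to sandwich the descent term from below and the curvature/variance term from above, and then verify that these two one-sided estimates stay compatible once assembled and telescoped --- otherwise $a$ and $b$ would not balance at the prescribed $c$. The remaining pieces (the smoothness descent lemma, the telescoping, and the $O(1/\sqrt{T})$ rate) are the standard argument carried through with these weighted constants.
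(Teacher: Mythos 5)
Your proposal follows essentially the same route as the paper's proof: the $L$-smoothness descent lemma applied to the weighted SGD update, a per-step bound of the form $\mathbb{E}\|\nabla\ell_{our}(\Theta^t)\|^2\le \frac{a}{\alpha}\,\mathbb{E}[\ell_{our}(\Theta^t)-\ell_{our}(\Theta^{t+1})]+b\alpha$, telescoping, $\min\le$ average, and the choice of $c$ that equalizes the two terms. Two remarks. First, your "absorption" step is unnecessary: the $\sigma$-bounded-gradient assumption caps $\|\nabla\ell_{i_t}(\Theta^t)\|^2$ by $\sigma^2$ pointwise, so the quadratic term is bounded by $\tfrac{L\alpha^2 w_{i_t}^2}{2}\sigma^2$ directly and no $\|\nabla\ell_{our}\|^2$ contribution arises from it; the paper does exactly this. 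Second, and more substantively, your (correct) direction for the clip bounds --- lower-bounding the descent term via $w_1$ and upper-bounding the variance term via $w_2$ --- produces $a\propto 1/w_1$ and $b\propto w_2$, hence a final bound proportional to $\sqrt{w_2/w_1}$, whereas the theorem states $\sqrt{w_1/w_2}$. The paper's own proof arrives at the stated constant only by applying the sandwich $w_1\le w_t\le w_2$ in the reversed direction at the summation step, so your more careful version does not literally reproduce the claimed inequality; it yields the same $O(\sigma/\sqrt{T})$ rate with $w_1$ and $w_2$ interchanged. You also rightly flag that the random weight cannot be pulled out of the conditional expectation of the inner product (and that termwise lower-bounding by $w_1$ is delicate when $\langle\nabla\ell_{our}(\Theta^t),\nabla\ell_{i_t}(\Theta^t)\rangle$ can be negative); the paper silently treats $w_t$ as deterministic and does not address this, so on that point your proposal is more honest than the published argument rather than less complete.
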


\subsection{Other Types of Augmented Data} 
\label{sec:types}
We now discuss how our approach can be applied to other types of augmented data.
For augmented data on the output domain, the objective in (\ref{eq:obj}) becomes:
\begin{equation}
\ell_{aug} = \ell (f_{x,y}(\x), \y) + \sum_{\phi_{\hat{y}} \in \mathcal{F}} w_{\hat{y}} \ell (f_{x,y}(\x), \hat{\y}). \label{eq:obj2}
\end{equation}
The augmented loss part can be rewritten using the polar coordinate system with $\y$ as the pole and $(\y,f_{x,y}(\x))$ as the polar axis, illustrated in Fig.~\ref{fig2}:
\begin{eqnarray}
\ell^2 (f_{x,y}(\x), \hat{\y})&=& \ell^2(\y, f_{x,y}(\x)) + \ell^2(\y, \hat{\y}) \nonumber\\
&&-2\ell(\y, f_{x,y}(\x))\ell (\y, \hat{\y})\cos\theta. \nonumber\\
\label{eq:tri2}
\end{eqnarray}
Similarly, the augmented data mapping function $\phi_{\hat{y}}$ can be re-parameterized into a function of the radius $r=\ell(\y, \hat{\y})$ (still the perturbation distance) and the radian of $\hat{\y}$.
The objective turns out to be the same as (\ref{eq:ours}).

For data perturbation on both the input and output space, we have:
\begin{equation}
\ell_{aug} = \ell (f_{x,y}(\x), \y) + \!\!\!\sum_{\phi_{\hat{x}, \hat{y}} \in \mathcal{F} }\!\!\! w_{\hat{x},\hat{y}} \ell (f_{x,y}(\hat{\x}), \hat{\y}). \!\label{eq:obj3}
\end{equation}
Illustrated in Fig.~\ref{fig3}, we first make use of the triangle inequality that:
\begin{eqnarray}
\ell (f_{x,y}(\hat{\x}), \hat{\y}) \!\!\!\!\!\!\!\!&& \leq \frac{1}{2}(\ell (f_{x,y}(\hat{\x}), \y) + \ell (\y, \hat{\y})) \nonumber\\
&& \!\!\!\!\!\!\!\!\!\!\!\!\!\!\!\!\!\!\!\!\!\!\!+\frac{1}{2}(\ell (f_{x,y}(\hat{\x}), f_{x,y}(\x)) + \ell (f_{x,y}(\x), \hat{\y})). \nonumber\\
\end{eqnarray}
Using (\ref{eq:tri}) and (\ref{eq:tri2}),  the objective is rewritten as:
\begin{eqnarray}
\ell_{our} &=&\ell (f_{x,y}(\x), \y)  \nonumber \\
&&+\mathbb{E}_{ (r,\theta) \in P}[ r+ \Phi(\ell (f_{x,y}(\x), \y))].\nonumber \\ 
\end{eqnarray}
Note that $\mathbb{E}_{ (r,\theta) \in P}[ r]$ is a scalar which is not dependent on any learning parameter. Thus optimizing the above objective is equivalent to optimizing (\ref{eq:ours}).

From the above analysis, we can see that our proposed objective in (\ref{eq:ours}) can be applied to handle all three kinds of augmented data mapping functions in text generation models.

\section{Loss Function}
\label{sec:loss}
In theory, our approach can be applied to any Lipschitz smooth loss function that holds the equation (\ref{eq:tri}). 
In this section, we show another valid loss function in our approach -- the word mover's distance (WMD)~\cite{kusner2015word,zhao2019moverscore}, which is previously used in various text generation tasks.
Next, we discuss the cross entropy loss, in which the proposed objective is not an upper-bound of the data augmentation objective. However, our approach can still converge with the same convergence rate and experimental results in the next section validate the effectiveness of our approach with the cross-entropy loss.  


\subsection{Word Mover's Distance}
\label{sec:wmd}
WMD, also named the optimal transport distance~\cite{chen2018improving}, leverages optimal transport to find an optimal matching of similar
words between two sequences, providing a way to measure their semantic similarity:
\begin{eqnarray}
&\ell_{WMD}(\seq{u},\seq{v})  = & \min_{T_{i,j}} \sum_{i,j}T_{i,j} d_{i,j} \\
&\mbox{s.t.} &\sum_{j=1}^M T_{i,j} = p_{u,i} \quad \forall i \nonumber\\
&&\sum_{i=1}^N T_{i,j} = p_{v,j} \quad \forall j \nonumber
\end{eqnarray}
 where $p_{u,i}/p_{v,j}$ is the probability distribution of the sentence, i.e. $\sum_i p_{u,i} = 1$ and $\sum_j p_{v,j} = 1$. $d_{i,j}$ is the cost for mis-predicting $u_i$ to $v_j$, where the squared Euclidean distance $d_{i,j} = \|u_i - v_j\|^2$ is used and $u_i/v_j$ is the word embedding vector. 
 Note that the Euclidean distance in (\ref{eq:l2}) is a special case of WMD by replacing the 1-gram used in WMD to $n$-gram with $n$ larger than the sentence's length. 
 WMD is the squared $L^2$ Wasserstein distance. We take its squared root, i.e. $\ell_{WD} = \sqrt{\ell_{WMD}}$, which holds an upper bound as the right hand side in (\ref{eq:tri}). Also, $\ell_{WD}$ is Lipschitz smooth.
 \begin{theorem}\label{thm-cos-law}
For the $L^2$ Wasserstein distance $W_2(\cdot, \cdot)$ on the Wasserstein space $W^2(\mathbb R^n)$ and any $x,y,z \in W^2(\mathbb R^n)$, we have
\begin{eqnarray}
W_2(y, z)^2 \leq W_2(x, y)^2 + W_2(z, x)^2 \nonumber\\
- 2 \cdot W_2(x, y) \cdot W_2(z, x) \cdot \cos \theta.
\end{eqnarray}
Here $\theta$ is the angel between the $\gamma_{xy}$ and $\gamma_{zx}$, $\gamma_{xy}$ is the geodesic (shortest path) connecting $x,y$ in $W^2(\mathbb R^n)$, and $\gamma_{zx}$ is the geodesic connecting $z,x$ in $W^2(\mathbb R^n)$. 
\end{theorem}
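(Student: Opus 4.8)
The plan is to read the claimed inequality as the \emph{hinge form} of Toponogov's comparison theorem for spaces of curvature bounded below by $0$, applied to the geodesic hinge at $x$ formed by $\gamma_{xy}$ and $\gamma_{zx}$. The one non-elementary input is the classical fact that the quadratic Wasserstein space $W^2(\mathbb{R}^n)=(\mathcal{P}_2(\mathbb{R}^n),W_2)$ has curvature $\geq 0$ in the sense of Alexandrov; this goes back to Otto's formal Riemannian calculus on Wasserstein space and is made rigorous in work of Ambrosio--Gigli--Savar\'e and Sturm / Lott--Villani, the point being that nonnegative curvature of the flat base space $\mathbb{R}^n$ passes to its $L^2$-Wasserstein space. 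Granting this, the inequality is then the exact Wasserstein analogue of the Euclidean law of cosines already used in~(\ref{eq:tri}), now with ``$=$'' weakened to ``$\leq$''.

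Concretely I would proceed as follows. First, recall that $W^2(\mathbb{R}^n)$ is a geodesic space, with constant-speed minimizing geodesics realized by McCann's displacement interpolation $t\mapsto\big((1-t)\,\mathrm{pr}_1+t\,\mathrm{pr}_2\big)_\#\gamma$ for an optimal coupling $\gamma$; in particular $\gamma_{xy}$ and $\gamma_{zx}$ exist, and since the space is nonnegatively curved the Alexandrov (upper) angle $\theta$ between them at $x$ is a genuine limit, so $\theta$ is well defined. Second, form the Euclidean comparison hinge: take $\bar y,\bar x,\bar z\in\mathbb{R}^2$ with $|\bar x-\bar y|=W_2(x,y)$, $|\bar x-\bar z|=W_2(z,x)$, and angle $\theta$ at $\bar x$. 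Third, invoke the hinge version of Toponogov's theorem for curvature $\geq 0$: the endpoints of the hinge in $W^2(\mathbb{R}^n)$ are no farther apart than in the comparison hinge, i.e. $W_2(y,z)\leq|\bar y-\bar z|$. Fourth, compute $|\bar y-\bar z|$ by the ordinary law of cosines, $|\bar y-\bar z|^2=W_2(x,y)^2+W_2(z,x)^2-2\,W_2(x,y)\,W_2(z,x)\cos\theta$, and square the inequality from the third step (both sides nonnegative) to obtain exactly the asserted bound.

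The part that needs care---and the only real obstacle---is the hypothesis of the third step, namely certifying that $W^2(\mathbb{R}^n)$ genuinely falls under the Alexandrov curvature-$\geq 0$ comparison theory, together with checking that the ``angle $\theta$'' named in the statement is the Alexandrov angle between the two chosen geodesics (it is: the metric comparison angle between two displacement interpolations issuing from $x$ coincides with the $L^2$ angle of their initial velocity fields). A minor wrinkle is that $W_2$-geodesics between non-absolutely-continuous measures need not be unique, but this is harmless, since Toponogov's hinge inequality holds for \emph{any} geodesic hinge and hence for whichever $\gamma_{xy},\gamma_{zx}$ are fixed. If one prefers to avoid the general machinery, I would keep in reserve a self-contained argument specific to $\mathbb{R}^n$: glue optimal plans $\gamma_{xy}\in\Gamma_o(x,y)$ and $\gamma_{xz}\in\Gamma_o(x,z)$ along their common marginal $x$, bound $W_2(y,z)^2$ by the transport cost of the resulting plan between $y$ and $z$, apply the Euclidean law of cosines fibrewise, and integrate, using Cauchy--Schwarz to identify the cross term with $W_2(x,y)\,W_2(z,x)\cos\theta$.
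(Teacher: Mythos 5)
Your proposal is correct and follows essentially the same route as the paper: both rest on the single non-elementary input that $W^2(\mathbb{R}^n)$ has non-negative curvature in the sense of Alexandrov (the paper cites Sturm, Proposition 2.10), and then deduce the inequality by comparison with a Euclidean triangle. The paper phrases the comparison step via monotonicity of the comparison angle (so $\theta \geq \widetilde\angle yxz$, hence $\cos\theta$ is bounded above by the law-of-cosines ratio), which is equivalent to the hinge form of Toponogov that you invoke.
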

\begin{theorem}
$\seq{u}$ and $\boldsymbol{v}$ are given as fixed. Assuming that $\boldsymbol{u}_\Theta$ is Lipschitz continuous with respect to the parameters $\Theta$. 
Then $
\ell_{WD} (\boldsymbol{u}_\Theta, \boldsymbol{v}) 
$
is Lipschitz continuous with respect to the parameters $\Theta$. 
\end{theorem}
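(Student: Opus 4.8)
The plan is to exploit that $\ell_{WD}=\sqrt{\ell_{WMD}}=W_2(\cdot,\cdot)$ is a genuine metric on the Wasserstein space $W^2(\mathbb{R}^n)$, so that the claim reduces, via the reverse triangle inequality, to showing that $\Theta\mapsto\boldsymbol{u}_\Theta$ is Lipschitz in the $W_2$ sense. Concretely, fix parameter vectors $\Theta_1,\Theta_2$; since $W_2$ obeys the triangle inequality, $W_2(\boldsymbol{u}_{\Theta_1},\boldsymbol{v})\le W_2(\boldsymbol{u}_{\Theta_1},\boldsymbol{u}_{\Theta_2})+W_2(\boldsymbol{u}_{\Theta_2},\boldsymbol{v})$ and symmetrically, whence
\[
\bigl|\ell_{WD}(\boldsymbol{u}_{\Theta_1},\boldsymbol{v})-\ell_{WD}(\boldsymbol{u}_{\Theta_2},\boldsymbol{v})\bigr|=\bigl|W_2(\boldsymbol{u}_{\Theta_1},\boldsymbol{v})-W_2(\boldsymbol{u}_{\Theta_2},\boldsymbol{v})\bigr|\le W_2(\boldsymbol{u}_{\Theta_1},\boldsymbol{u}_{\Theta_2}).
\]
This step needs nothing beyond $W_2$ being a metric on $W^2(\mathbb{R}^n)$ (the same ambient structure already invoked for Theorem~\ref{thm-cos-law}), and the fixed target $\boldsymbol{v}$ enters only to keep the second marginal constant; an identical bound holds if $\boldsymbol{v}$ is perturbed instead.

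Next I would bound $W_2(\boldsymbol{u}_{\Theta_1},\boldsymbol{u}_{\Theta_2})$ linearly in $\|\Theta_1-\Theta_2\|$. Write $\boldsymbol{u}_\Theta=\sum_i p_{u,i}\,\delta_{u_i(\Theta)}$ with the standard WMD weights $p_{u,i}$ (so $\sum_i p_{u,i}=1$) and $\Theta$-dependent atoms $u_i(\Theta)$. The synchronous transport plan $T_{ii}=p_{u,i}$, which matches the $i$-th atom of $\boldsymbol{u}_{\Theta_1}$ to the $i$-th atom of $\boldsymbol{u}_{\Theta_2}$, is admissible, so
\[
W_2(\boldsymbol{u}_{\Theta_1},\boldsymbol{u}_{\Theta_2})^2\le\sum_i p_{u,i}\,\|u_i(\Theta_1)-u_i(\Theta_2)\|^2\le\Bigl(\sum_i p_{u,i}L_i^2\Bigr)\|\Theta_1-\Theta_2\|^2,
\]
where $L_i$ is the Lipschitz constant of $u_i(\cdot)$ furnished by the hypothesis that $\boldsymbol{u}_\Theta$ is Lipschitz in $\Theta$. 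Taking square roots and combining with the previous display gives $\bigl|\ell_{WD}(\boldsymbol{u}_{\Theta_1},\boldsymbol{v})-\ell_{WD}(\boldsymbol{u}_{\Theta_2},\boldsymbol{v})\bigr|\le L\,\|\Theta_1-\Theta_2\|$ with $L=(\sum_i p_{u,i}L_i^2)^{1/2}$, which is the assertion. (If one instead reads the hypothesis as $\Theta\mapsto\boldsymbol{u}_\Theta$ being Lipschitz directly as a map into $(W^2(\mathbb{R}^n),W_2)$, this middle step is immediate by definition.)

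The only real obstacle is this middle step, and only in the variant where the atom probabilities $p_{u,i}$ themselves depend on $\Theta$: then the synchronous plan is no longer marginal-consistent, and one must instead split $W_2(\boldsymbol{u}_{\Theta_1},\boldsymbol{u}_{\Theta_2})$ into a synchronous-coupling contribution plus a term controlled by $\sum_i|p_{u,i}(\Theta_1)-p_{u,i}(\Theta_2)|$ times the diameter of the embedding support — still linear in $\|\Theta_1-\Theta_2\|$ provided the weights are Lipschitz and the word embeddings stay bounded. Importantly, the whole argument never differentiates the optimal transport matrix $T$; upper-bounding $W_2$ by an explicit suboptimal coupling is exactly what keeps the estimate clean.
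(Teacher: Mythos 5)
Your proposal is correct and takes essentially the same route as the paper's own proof: the reverse triangle inequality for $W_2$ reduces the claim to bounding $W_2(\boldsymbol{u}_{\Theta_1},\boldsymbol{u}_{\Theta_2})$, which is then controlled by the explicit diagonal (synchronous) coupling $T_{i,j}=\delta_{ij}p_{u,i}$ together with the Lipschitz assumption on the atoms. Your closing caveat about $\Theta$-dependent weights is a genuine subtlety the paper silently glosses over, since its diagonal plan is only marginal-consistent when $p_{u_\Theta,i}=p_{u_{\Theta'},i}$.
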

\noindent
Roughly speaking, according to \citet{sturm2006geometry}[Proposition 2.10], the sectional curvature of Wasserstein space $W^2(\mathbb R^n)$ is non-negative. Hence, every geodesic triangle in $W^2(\mathbb R^n)$ is fatter than the one with same sides length in $\mathbb R^2$. As a consequence, an inequality like cosine law is satisfied on $W^2(\mathbb R^n)$, i.e., Theorem \ref{thm-cos-law} holds.  A formal proof of the above two theorems is provided in the Appendix.  Thus, all our derivations in Section.~\ref{sec:model} hold.

 The exact computation of $\ell_{WD}$ is expensive during training. In our experiments, we resort to the inexact proximal point method for optimal transport algorithm to compute it~\cite{chen2018improving}.

\subsection{Cross-entropy Loss}
Although WMD is effective for various sequence generation tasks, 
the most conventional loss function adopted in existing generation models is the cross-entropy loss. It measures the word difference at each word $y_i$
of the output sequence $\y$:
\begin{eqnarray}
\ell_{CE}(\y_i,\seq{p}_i) &=&  \y_i^T \log (\seq{p}_i)  \label{eq:cross-entropy}\\
\ell_{CE}(\y,\seq{p}) &=& \sum_{i=1}^{|\y|} \ell_{CE}(\y_i,\seq{p}_i)
\end{eqnarray}
where $\y_i$ is the target one-hot vector with the correct dimension as 1 and 0 elsewhere, and $\seq{p}_i$ is the predicted probability output by a softmax layer. We adopt the maximum likelihood estimation as the training paradigm by assuming truth for preceding words in predicting $\seq{p}_i$.


The cross-entropy loss is also Lipschitz smooth, and thus we can guarantee its convergence from Theorem~\ref{thm1}.
Unfortunately, it does not satisfy the equation in (\ref{eq:tri}), and thus minimizing our objective in (\ref{eq:ours}) does not necessarily approximate the data augmentation objective in (\ref{eq:obj}).
In our experiments, we also try the cross-entropy loss, and results show that our objective is effective to improve the model performance compared with the base model. This is not surprising since our approach is optimized by gradient weighting and thus at least it is a useful data weighting method. 
%

\section{Experiments}

The proposed approach provides a new paradigm and understanding of data augmentation for text generation. To evaluate that our approach can mimic the effect of data augmentation, we conduct experiments on two text generation tasks -- neural machine translation and conversational response generation.
We compare our approach with two most popular data augmentation methods (one token-level and one sentence-level augmentation method)  that can be applied on various text generation tasks: 
\begin{itemize}[wide=0\parindent,noitemsep, topsep=0pt]
    \item Masked Language model (MLM): We use a pre-trained BERT~\cite{devlin2019bert,wolf-etal-2020-transformers} and randomly choose 15\% of the words for each sentence. BERT takes in these masked words to predict these masked positions with new words. We augment one sample from each original training sample. Thus the data size increases to twice of the original one. Note that we only augment the English side of translation datasets.
    \item Back-translation (BT): For neural machine translation, we employ a fixed target-to-source translation model trained on the original dataset.  For conversational response generation, we perturb both the input and output text of the original sample pair using two pretrained translation model: an English-to-German model and its backward counterpart,  which are obtained using the WMT14 corpus with 4.5M sentence pairs\footnote{Datasets used in this work can be found at \url{https://nlp.stanford.edu/projects/nmt/, http://coai.cs.tsinghua.
edu.cn/hml/dataset/\#commonsense
}}. We again augment one sample from each original training sample.
\end{itemize}

\begin{table*}[t]
\begin{center}
\resizebox{2.05\columnwidth}{!}{
\begin{tabular}{lcccccccc}\toprule
 \hline
\textbf{Model} &  \textbf{De$\Rightarrow$En} & \textbf{En$\Rightarrow$De} & \textbf{Vi$\Rightarrow$En} & \textbf{En$\Rightarrow$Vi}  
 & \textbf{Fr$\Rightarrow$En} & \textbf{En$\Rightarrow$Fr} & \textbf{It$\Rightarrow$En} & \textbf{En$\Rightarrow$It}\\\hline
\textsc{CE}     & 27.98   & 22.85 & 24.22   & 27.09 & \underline{40.49} & 40.86 & 29.70 & 26.85
 \\
\textsc{CE+MLM}   &  28.70  & 23.23 & 24.40  & 26.20 & 40.03 & 40.79 & 29.35 &	26.90 \\
\textsc{CE+BT}     & \textbf{29.35}  & \textbf{24.09} & \textbf{25.00}   & \textbf{27.41} & \textbf{40.87} & \textbf{42.64} & \textbf{30.44} & \textbf{27.94}\\
\textsc{CE+OURS}    & \underline{29.16}   & \underline{23.26}  & \underline{24.74}  & \underline{27.12} & 40.46 & \underline{40.94} &
 \underline{29.79} & \underline{27.11} \\\hline
\textsc{WD}  &  28.53  &  22.95 &  24.03 &  26.69 & 39.71 & 40.48 & 29.74 & 27.08
  \\
\textsc{WD+MLM}  & \underline{28.80} & 22.98 & \underline{24.33} &  \textbf{26.88} & 39.57 & \underline{40.61} & \textbf{29.98} & 26.59  \\
\textsc{WD+BT}   & 28.56   &  \underline{23.10}   & \textbf{24.51}  &  \underline{26.74} & \underline{39.77} & 40.60		 & 29.56 & \textbf{27.33}\\
\textsc{WD+Ours}  & \textbf{28.91}   & \textbf{23.42} & 24.26   & 26.73 & \textbf{40.46} & \textbf{41.07} & \underline{29.86} & \underline{27.15}
\\\hline
\bottomrule
\end{tabular}
}
\end{center}
\caption{\label{tab:translation1} BLEU scores on various translation datasets.  CE: Cross-Entropy loss; WD: $L^2$ Wasserstein distance. The best results are in \textbf{bold}, and the second-best results are in \underline{underline}.}
\end{table*}

We set the same weight $w$ of all augmented loss parts used in $\ell_{aug}$ as a hyper-parameter, and tune it on the development set of each dataset.
Since Euclidean distance is a special case of WMD as discussed in Sec~\ref{sec:wmd}, we show results of all methods with the use of the cross-entropy loss and WD.
We mainly use the Fairseq~\cite{ott2019fairseq} Seq2seq implementation as our model.
Both encoder and decoder are one-layer LSTM. The word embedding dimension is 256.  Attention~\cite{luong2015effective} is used with a dropout rate of 0.1. All parameters are randomly initialized based on the uniform distribution $[-0.1, +0.1]$. We use SGD to optimize our models, and the learning rate is started with 1.0. After 8 epochs, we start to halve the learning rate after each epoch. 
All experiments are run on a single NVIDIA V100 GPU. 
Code for our experiments are available once our work is accepted.

\subsection{Neural Machine Translation}
We use translation benchmarks IWSLT14 En–De, En–Fr, En–It, and IWSLT15 En–Vi in our experiments. 
The datasets of IWSLT14 are pre-processed with the script in Fairseq~\footnote{\url{https://github.com/pytorch/fairseq/blob/master/examples/translation/prepare-iwslt14.sh}}. 
For IWSLT14 datasets, we use tst2011 as validation set and tst2012 as test set. The IWSLT15 dataset is the same as that used in \citet{luong2015stanford}, and the validation and test sets are tst2012 and tst2013, respectively.

Table~\ref{tab:translation1} shows the BLEU scores on their test sets. For both cross-entropy loss and $L^2$ Wasserstein distance, all data augmentation methods (MLM, BT and OURS) perform better than the corresponding base models in most cases. 
The improvement margins are different across the various datasets. The reason may be that the datasets are in different scales and the alignment difficulty between different languages can also vary. 
The performance of MLM is not stable from our results, which is largely due to that masked tokens are possible to be filled in with different semantic ones and thus the semantics of the sentence changes. Therefore, the augmented data are not aligned indeed, and the translation model learning can be distracted. Note that we also evaluate our method using the Transformer model and get some similar findings. Experimental results of the Transformer model are presented in the appendix.

Compared to BT and MLM, our approach that mimics the effect of data augmentation without actually constructing augmented samples, shows encouraging results. 
Note that our proposed objective may not have a theoretical guarantee on the cross-entropy loss. Yet, it still manages to improve the base model except for Fr$\Rightarrow$En, and surpasses MLM on all datasets. With the use of $L^2$ Wasserstein distance, our approach even outperforms BT and achieves the best performance on half test sets. This validates the benefits of not using any specific data augmentation mapping function in data augmentation as in our proposed objective.

\begin{figure}[t]
  \centering
  \includegraphics[width=\linewidth]{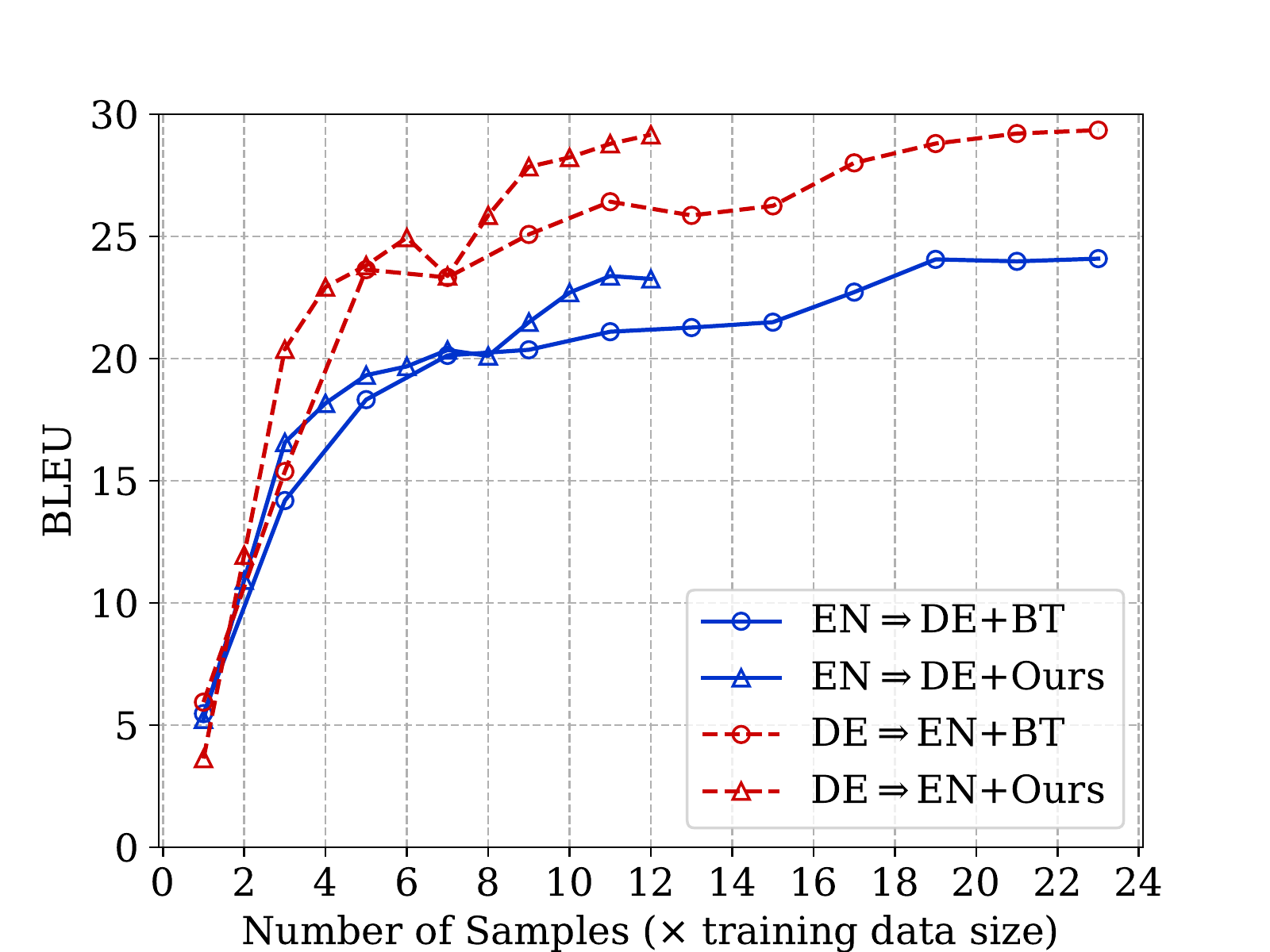}
  \caption{BLEU scores by models updated with the same number of samples. }
  \label{fig:update}
\end{figure}

\begin{figure}[t]
  \centering
  \includegraphics[width=\linewidth]{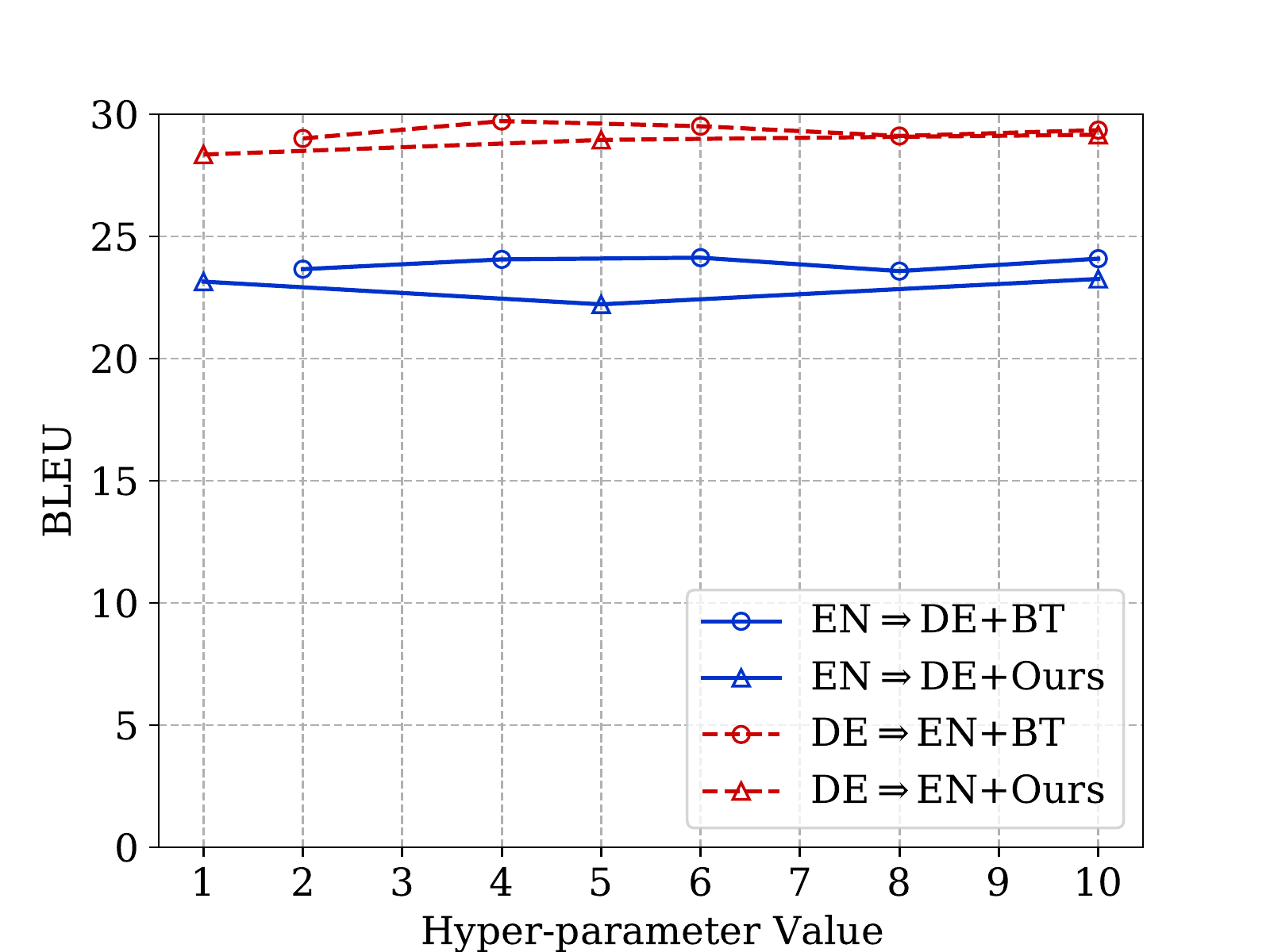}
  \caption{BLEU scores by models trained with different hyper-parameters. Values in the x-axis are re-scaled in order to visualize them in the same range.}
  \label{fig:hyper-para}
\end{figure}

\begin{table*}[t]
\begin{center}
\resizebox{2.05\columnwidth}{!}{
\begin{tabular}{lcccccc|cc}\toprule
\hline
\textbf{Model} 
&  PPL & BLEU  &  BLEU-1 & BLEU-2 & Dist1 & Dist2 &  Flu & Rel \\\hline
\textsc{CE}   & 7.22 &  0.75 & 16.35 & \underline{1.38}  & 0.889 & 0.855 
  & \underline{3.571} & \underline{3.314} \\

\textsc{CE+MLM} & \textbf{6.82} &	\underline{0.76} &	\underline{16.65} & 1.31 & \textbf{0.917} & \textbf{0.868} & 3.552 & 3.184
  \\
\textsc{CE+BT}   &  7.38  & 0.68 & \textbf{17.04}  &  1.33 & 0.892 & 0.851 & 3.557 & 3.249
 \\
\textsc{CE+OURS}   &  \underline{7.10}  & \textbf{0.85} & 16.41  & \textbf{1.44} & \underline{0.894} & \underline{0.864} & \textbf{3.632}  & \textbf{3.370}
 \\\hline
\textsc{WD} &  7.10 &  \textbf{0.87} & 15.09 &  \underline{1.33}  & 0.872 & \underline{0.863} & \textbf{3.644} &  \underline{3.354} \\
\textsc{WD+MLM}  &  7.09 & 0.57 & 15.75 & 1.25 & \textbf{0.913} & \textbf{0.881} & 3.575  & 3.188
    \\
\textsc{WD+BT} & \textbf{6.92}  &  0.81 &  \underline{15.97}  &  1.29 & 0.881 & 0.853 & 3.579 & 3.279
 \\
\textsc{WD+Ours} & \underline{7.01} & \underline{0.84}  & \textbf{16.56} & \textbf{1.39}  & \underline{0.893} & 0.855 & \underline{3.629} & \textbf{3.447}
  \\
\hline
\textsc{Human}  & - &  - & - & -  & 0.947 & 0.897 & 4.235 & 4.086 \\ 
\hline
\bottomrule
\end{tabular}
}
\end{center}
\caption{\label{tab:dialogue} Automatic and human evaluation results on Reddit. Human: the gold reference of the query. The best results are in \textbf{bold}, and the second-best results are in \underline{underline}.}
\end{table*}

We provide further analysis on the performance of our approach versus BT.
In Fig.~\ref{fig:update}, we compare testing BLEU scores obtained by models updated with the same number of samples. Since we construct one augmented sample from each original training sample, the total number of samples used in BT is twice as much as that of our approach. 
We can see that our approach achieves compatible performance with BT, while only requires half of the training data. This shows that our approach, without involving additional calculations on extra samples, can effectively save the computational expense. Fig.~\ref{fig:hyper-para} shows the sensitivity of performance under different hyper-parameters. For our approach, we vary across different $C_1(R)$'s; for BT, we vary the sample weight $w$ of the augmented samples. We re-scale $C_1(R)$ by $10^{-4}$ and $w$ by $10^{-1}$, in order to visualize them within the same range of x-axis. 
Both BT and our approach demonstrate their robustness under different settings of their hyper-parameters.



\subsection{Conversational Response Generation}
We use the English single-round Reddit conversation dataset~\cite{Zhou2018CommonsenseKA}. Following previous work on data augmentation for dialogue system~\cite{cai2020,zhang2020dialogue}, we simulate a low data regime so that data augmentation is expected to be more effective.
Thus, we select data pairs with the length of both the query and response less than 20, and randomly split them into 200K for training, 2K for validation and 5K for testing.
Automatic evaluation for each method is performed on all test data. We report Perplexity, BLEU and BLEU-k (k=1,2) to measure the response coherence; Distinct-k (k=1,2)~\cite{li_diversity} to measure the response diversity.
We also hire five annotators from a commercial annotation company for manual evaluation on 200 pairs randomly sampled from the test set.
Results of all methods are shuffled for annotation fairness. Each annotator rates each response on a 5-point scale (1: not acceptable; 3: acceptable; 5: excellent; 2 and 4: used in unsure case) from two perspectives:  Fluency and Relevance.

Results are summarized in Table~\ref{tab:dialogue}.
On automatic metrics, BT only shows marginal improvements on a few metrics, which can not exhibit its strength as in translation tasks. 
MLM effectively increases the response diversity (Dist1\&2). This is due to nature of the conversation data that conversation pair often remains coherent even if the semantics of the query or response has been slightly changed. Thus, MLM can increase data diversity, which is appreciated in training response generation models.
In terms of human evaluation, BT and MLM can barely improve the base model.
As for our approach, it achieves the best or second best results on most metrics for both loss functions,  demonstrating more robust performance than BT and MLM.
This is consistent with our statement in the introduction that we often need to design proper augmented data mapping functions carefully for a target generation task, which requires non-trivial work. As such, it is meaningful to avoid the use of specific data augmentation techniques and find a unified formulation of data augmentation for general generation tasks.
From our results, the proposed objective demonstrates its power to achieve the effect of data augmentation across different generation tasks.

\section{Conclusions and Future Work}
We have proposed an objective of formulating data augmentation without any use of any augmented data mapping function.
We show its optimization and provide the corresponding convergence rate. 
Both the $L^2$ Wasserstein distance and the cross-entropy loss are discussed with their use in our objective and their corresponding theoretical guarantees. 
Different from previous data augmentation works that need to add manipulated data into the training process, our gradient based approach provides a potential way to obtain performance improvements, which may come from augmented data, without incurring the computational expense. 
Experiments on both neural machine translation and conversational response generation validate  the effectiveness of our objective compared to existing popular data augmentation methods: masked language models and back-translation.

We believe this work provides a new understanding of data augmentation. Our approach can also be useful to a wide range of tasks including 
text classification tasks, which can be seen as special cases of text generation tasks, and
cross-modality generation tasks such as image captioning, in which we can skip the step to use various image augmentation techniques. 
%

We would like to point out that some parts of our approach can be improved in the future, which may lead to a better performance and generalization. Firstly, current distributions we choose in the re-parameterized loss are relatively simple. Some points under current continuous distributions may not correspond to valid text sequences in the original text space, due to the discreteness of natural languages. A possible way is that we change to leverage more informative distributions, such as including prior distributions computed from several augmented samples. Secondly, our method is derived under the framework of SGD and it is possible to extend it to the Adam framework~\cite{kingma2014adam, chen2018convergence, reddi2019convergence}. We also leave the more general version of our work in the future.

\bibliography{acl2021}
\bibliographystyle{acl_natbib}

\onecolumn
\appendix

\section{Proof of Corollary 1}

 \begin{eqnarray}
&&E_{ (r,\theta) \in P}[ \sqrt{L^2 + r^2 - 2Lr\cos\theta}] \nonumber\\
&=& \int_{r=0}^{R}\int_{\theta=0}^{\pi} \frac{1}{R} \cdot \frac{1}{\pi} \cdot \sqrt{L^2 + r^2 - 2Lr\cos\theta} \mathrm{d}r \mathrm{d}\theta, \nonumber\\
& = & \int_{r=0}^{R} \frac{1}{R} \cdot \frac{1}{\pi} (\int_{\theta=0}^{\pi/2} \sqrt{L^2 + r^2 - 2Lr\cos\theta} \mathrm{d}\theta +\int_{\theta=\pi/2}^{\pi} \sqrt{L^2 + r^2 - 2Lr\cos\theta} \mathrm{d}\theta) \mathrm{d}r \nonumber\\
& \leq & \int_{r=0}^{R} \frac{1}{R} \frac{1}{2} (\sqrt{L^2 + r^2} + L + r) \mathrm{d}r \nonumber\\
&=& \frac{1}{2} L + \frac{R}{4} +  \frac{1}{2R}\int_{r=0}^{R} \sqrt{L^2 + r^2} \mathrm{d}r \nonumber\\
&\leq& \frac{1}{2} L + \frac{R}{4} +   \frac{1}{2R}\int_{r=0}^{R} \frac{1 + L^2 + r^2}{2} \mathrm{d} r \nonumber \\
&=&  \frac{1}{2} L + L^2 C_1 + C_2(R).  
\end{eqnarray}
\noindent
where $L=\ell (f_{x,y}(x), y)$, $C_1 =\frac{1}{4}$, $C_2(R)=\frac{R^2}{12} + \frac{R}{4} + \frac{1}{4}$. 

\section{Proof of Corollary 2}
\begin{eqnarray}
&&\int_{r=0}^{\infty}\int_{\theta=0}^{\pi} \frac{1}{R}\exp(-\frac{r}{R}) \frac{1}{\pi}(\sqrt{L^2 + r^2- 2Lr\cos\theta} \mathrm{d}r \mathrm{d}\theta \nonumber\\
& \leq & \int_{r=0}^{R} R\exp(-\frac{r}{R}) \frac{1}{2} (\sqrt{L^2 + r^2} + L + r) \mathrm{d}r \nonumber\\
&=& \int_{r=0}^{R} R\exp(-\frac{r}{R}) \frac{1}{2} (L + r) \mathrm{d}r  + \int_{r=0}^{R} R\exp(-\frac{r}{R}) \frac{1}{2} (\sqrt{L^2 + r^2}) \mathrm{d}r \nonumber\\
&=& \frac{R^2}{2} (1-\mathrm{e}^{-1}) L + \frac{R^3}{2} (1-2 \mathrm{e}^{-1}) + \int_{r=0}^{R} R\exp(-\frac{r}{R}) \frac{1}{2} (\sqrt{L^2 + r^2}) \mathrm{d}r \\
&\leq& \frac{R^2}{2} (1-\mathrm{e}^{-1}) L + \frac{R^3}{2} (1-2 \mathrm{e}^{-1}) + \int_{r=0}^{R} R\exp(-\frac{r}{R}) \frac{1+L^2+r^2}{4}\mathrm{d}r \\
&=& LC_1(R) + L^2\frac{C_1(R)}{2} + C_2(R)
\nonumber
\end{eqnarray}
\noindent
where $C_1(R)=(1-e^{-1})\frac{R^2}{2}$,
and $C_2(R) = \frac{R^3}{2} + \frac{3R^2}{4} - (\frac{R^3}{2} + \frac{R^4}{2}) e^{-1}$. 

\section{Proof of Theorem 1}

We study the nonconvex \emph{finite-sum} problems of the form
\begin{equation}
\min_{\Theta} \mathcal{L}(\Theta) := \frac{1}{n}\sum_{i=1}^{n}\ell_{our}(\Theta,x_i,y_i),
\label{obj}
\end{equation}
where both $\mathcal{L}$ and $\ell_{our}$ may be nonconvex. 
For ease of notation, we use $\ell$ to denote $\ell_{our}$ in the following of the proof.
We denote the class of such finite-sum Lipschitz smooth functions by $\mathcal{F}_n$.
We optimize functions in $\mathcal{F}_n$ with the gradient in Eq.~8 by SGD.
For $\mathcal{L} \in\mathcal{F}_n$, SGD takes an index $i\in[n]$ and a sample in the training set, and returns the pair $(\ell_i(\Theta),\nabla \ell_i(\Theta))$.

\begin{definition}
We say $\mathcal{L}:\mathbb{R}^d\to \mathbb{R}$ is $L$-smooth if there is a constant $L$ such that
\begin{equation}
||\nabla \ell(\Theta') - \nabla \ell(\Theta)||\le L||\Theta'-\Theta||,\forall \Theta',\Theta\in \mathbb{R}^d.
\end{equation}
\end{definition}

\begin{definition}
A point $\Theta$ is called $\epsilon$-accurate if $||\nabla \ell(\Theta)||^2\le\epsilon$.
A stochastic iterative algorithm is said to achieve $\epsilon$-accuracy in $t$ iterations if $\mathbb{E}[||\nabla \ell(\Theta^t)||^2]\le \epsilon$, where the expectation is over the stochasticity of the algorithm.
\end{definition}

\begin{definition}
We say $\ell \in\mathcal{F}_n$ has $\sigma$-bounded gradients if $||\nabla \ell_i(\bm{\theta})||\le\sigma$ for all $i\in[n]$ and $\Theta\in\mathbb{R}^d$.
\label{bound}
\end{definition}

Let $\alpha_t$ denote the learning rate at iteration $t$, and $w_{i_t}$ be the gradient weight assigned to sample $i$ by our approach.
By SGD, we have
\begin{eqnarray}
\Theta^{t+1} = \Theta^t - \alpha_tw_{i_t}\nabla \ell_{i_t}(\Theta^t), i\in [n].
\label{iter}
\end{eqnarray}

\begin{definition}
We say the positive gradient weight $w$ in our approach is bounded if there exist constants $w_1$ and $w_2$ such that $w_1 \le w_i \le w_2$ for all $i\in[n]$.
\label{iw}
\end{definition}


\begin{proof}[Proof of Theorem1]
According to the Lipschitz continuity of $\nabla \ell$, the iterates of our approach satisfy the following bound:
\begin{eqnarray}
\mathbb{E}[\ell({\Theta}^{t+1})] \le \mathbb{E}[\ell({\theta}^t) + \langle\nabla \ell({\Theta}^t),{\Theta}^{t+1}-{\Theta}^t\rangle + \frac{L}{2}||{\Theta}^{t+1}-{\Theta}^t||^2].
\label{Lcon}
\end{eqnarray}
After substituting~(\ref{iter}) into~(\ref{Lcon}), we have:
\begin{eqnarray}
\mathbb{E}[\ell({\Theta}^{t+1})] 
\!\!\!\!\!\!&& \le \mathbb{E}[\ell({\Theta}^t)] - \alpha_t w_t\mathbb{E}[||\nabla \ell({\Theta}^t)||^2] + \frac{L\alpha_t^2w_t^2}{2}\mathbb{E}[||\nabla \ell_{i_t}({\Theta}^t)||^2] \nonumber \\
\!\!\!\!\!\!&& \le \mathbb{E}[\ell({\Theta}^t)] - \alpha_tw_t\mathbb{E}[||\nabla \ell({\Theta}^t)||^2] + \frac{L\alpha_t^2w_t^2}{2}\sigma^2. \label{f}
\end{eqnarray}
The first inequality follows from the unbiasedness of the stochastic gradient $\mathbb{E}_{i_t}[\nabla \ell_{i_t}({\Theta}^t)]=\nabla \ell({\Theta}^t)$.
The second inequality uses the assumption on gradient boundedness in Definition~\ref{bound}.
Re-arranging~(\ref{f}) we obtain
\begin{eqnarray}
\mathbb{E}[||\nabla \ell({\Theta}^t)||^2]\le \frac{1}{\alpha_tw_t}\mathbb{E}[\ell({\Theta}^t)-\ell({\Theta}^{t+1})] + \frac{L\alpha_tw_t}{2}\sigma^2.
\label{f2}
\end{eqnarray}
Summing~(\ref{f2}) from $t=0$ to $T-1$ and using that $\alpha_t$ is a fixed $\alpha$, we obtain
\begin{eqnarray}
\min_t\mathbb{E}[||\nabla \ell({\Theta}^t)||^2]
\!\!\!\!\!\!&& \le \frac{1}{T}\sum_{t=0}^{T-1}\mathbb{E}[||\nabla \ell({\Theta}^t)||^2] \nonumber \\
\!\!\!\!\!\!&& \le \frac{1}{T}\sum_{t=0}^{T-1}\frac{1}{\alpha w_t}\mathbb{E}[\ell({\theta}^t)-\ell({\theta}^{t+1})] + \frac{1}{T}\sum_{t=0}^{T-1}\frac{L\alpha w_t}{2}\sigma^2 \nonumber \\
\!\!\!\!\!\!&& \le \frac{1}{T\alpha w_2}\left(\ell({\Theta}^0-\ell({\Theta}^T)\right) + \frac{L\alpha w_1}{2}\sigma^2 \nonumber \\
\!\!\!\!\!\!&& \le \frac{1}{T\alpha w_2}\left(\ell({\Theta}^0-\ell({\Theta}^*)\right) + \frac{L\alpha w_1}{2}\sigma^2 \nonumber \\
\!\!\!\!\!\!&& \le \frac{1}{\sqrt{T}}\left(\frac{1}{cw_2}(\ell({\Theta}^0)-\ell({\Theta}^*)) + \frac{Lcw_1}{2}\sigma^2\right).
\end{eqnarray}
The first step holds because the minimum is less than the average. 
The second step is obtained from~(\ref{f2}). 
The third step follows from the assumption on gradient weight boundedness in Definition~\ref{iw}.
The fourth step is obtained from the fact that $\ell({\Theta}^*)\le \ell({\Theta}^T)$.
The final inequality follows upon using $\alpha=c/\sqrt{T}$.
By setting
$c = \sqrt{\frac{2(\ell({\Theta}^0)-\ell({\Theta}^*))}{L\sigma^2 w_1w_2}}$
in the above inequality, we get the desired result. 
\end{proof}

\section{Proof of $\ell_{WD}$}
%


We begin with some concepts in mathematics. 
Let $(X,|\,\cdot\,,\cdot\,|)$ be a complete metric space. 
\begin{definition}
\label{defn-geod-space}
A rectifiable curve $\gamma(t) : I \subset \mathbb R^+ \to X$ connecting two points $p,q$ is called a
 \emph{geodesic} if its length is equal to $|p,q|$ and it has unit speed. Here, we say that $\gamma(t) : I \to X$ has unit speed, if for any $s, t \in I$, $s<t$, we have, the length of the restriction $$\gamma : [s, t] \to X$$ is $t-s$.
 A metric space $X$ is called a \emph{geodesic space}
  if, for every pair of points $p,q\in X$, there exists some geodesic connecting them.
 \end{definition}
  
\begin{definition}\label{defn-alex}
We say that, a geodesic space $(X, |\cdot\,,\cdot|)$ has non-negative curvature in the sense of Alexandrov, if it satisfies the following property:
\begin{itemize}
  \item for any $p\in X$, and 
 for any unit speed geodesics $\gamma(s) : I \to X$ and $\sigma(t) : J \to X$ with $\gamma(0)=\sigma(0):=p$, the comparison angle
 $${\widetilde\angle}\gamma(s)p\sigma(t):=\arccos \left(  \frac{t^2 + s^2 - |\gamma(s), \sigma(t)|^2}{2\cdot s \cdot t} \right)$$
is non-increasing with respect to each of the variables $t$ and $s$.
\end{itemize}
The angle between $\gamma $ and $\sigma$ at $p$ is defined by 
$$
\lim_{s,t \to 0^+}\arccos \left(  \frac{t^2 + s^2 - |\gamma(s), \sigma(t)|^2}{2\cdot s \cdot t} \right) \in [0, \pi].
$$
In other words, every geodesic triangle in $X$ is fatter than the one with sides length in $\mathbb R^2$ (Figure \ref{pic-alex}). 
\end{definition}

\begin{figure}[htbp]
\centering
\includegraphics[width=3in]{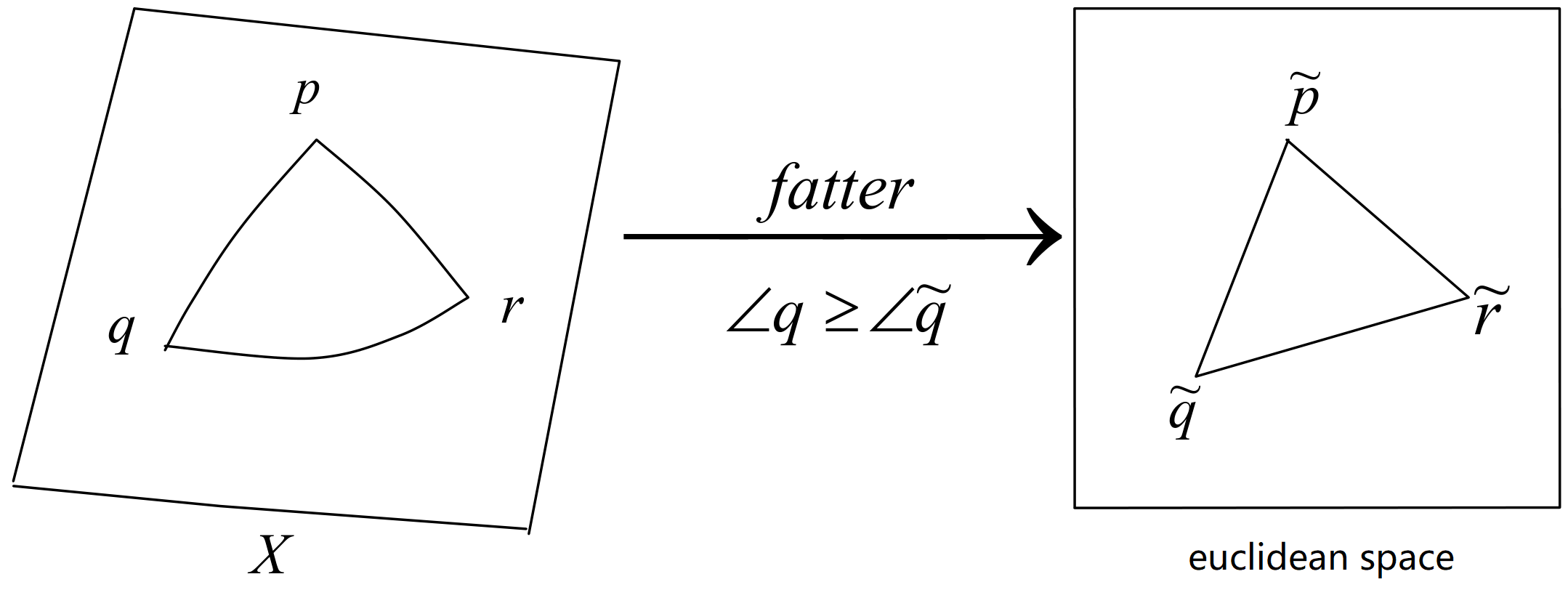}
\caption{geodesic space with non-negative curvature}
\label{pic-alex}
\end{figure}

According to \citet{sturm2006geometry}[Proposition 2.10], the Wasserstein space $W^2(\mathbb R^n)$ has non-negative curvature in the sense of Alexandrov. Precisely, 
\begin{lemma} \citet{sturm2006geometry}[Proposition 2.10]
Let $n \geq 1$. The Wasserstein space $W^2(\mathbb R^n)$ equipped with the $L^2$ Wasserstein distance $W_2(\cdot, \cdot)$ has non-negative curvature in the sense of Alexandrov.
\end{lemma}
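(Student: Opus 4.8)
The plan is to verify the defining property of non-negative Alexandrov curvature (Definition~\ref{defn-alex}) for $W^2(\mathbb R^n)$ through its standard equivalent form, the point-side comparison inequality: for every constant-speed geodesic $(\mu_t)_{t\in[0,1]}$ in $W^2(\mathbb R^n)$ and every $\nu\in W^2(\mathbb R^n)$,
\begin{equation}\tag{$\star$}
W_2(\nu,\mu_t)^2 \;\geq\; (1-t)\,W_2(\nu,\mu_0)^2 + t\,W_2(\nu,\mu_1)^2 - t(1-t)\,W_2(\mu_0,\mu_1)^2.
\end{equation}
It is classical in Alexandrov geometry that $(\star)$, required of all geodesics and all $\nu$, is equivalent to the monotonicity of comparison angles demanded in Definition~\ref{defn-alex} (equivalently, every geodesic triangle being fatter than its Euclidean model of equal side lengths); I would invoke this equivalence rather than reprove it. The whole argument then reduces to establishing $(\star)$.

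First I would recall the structure of geodesics in $W^2(\mathbb R^n)$: by McCann's displacement interpolation there is an optimal plan $\pi\in\Pi(\mu_0,\mu_1)$ with $\mu_t=(e_t)_{\#}\pi$, where $e_t(x,y)=(1-t)x+ty$, and $W_2(\mu_0,\mu_1)^2=\int |x-y|^2\,d\pi$. Next, fixing an optimal plan $\alpha\in\Pi(\nu,\mu_t)$ realizing $W_2(\nu,\mu_t)^2$, I would apply the gluing lemma over the common marginal $\mu_t$ to build a single measure $\Lambda$ on $\mathbb R^n\times\mathbb R^n\times\mathbb R^n$ (coordinates $(w,x,y)$) whose $(x,y)$-marginal is $\pi$ and whose $\big(w,\,(1-t)x+ty\big)$-marginal is $\alpha$; concretely, $\Lambda=\int \alpha_z\otimes\pi_z\,d\mu_t(z)$, where $\alpha_z$ and $\pi_z$ are the disintegrations of $\alpha$ and of $\pi$ over $z=e_t(x,y)$.

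The heart of the proof is then a purely Euclidean, pointwise identity, exact because $\mathbb R^n$ is flat:
\begin{equation*}
\big|\,w-\big((1-t)x+ty\big)\,\big|^2 = (1-t)\,|w-x|^2 + t\,|w-y|^2 - t(1-t)\,|x-y|^2.
\end{equation*}
Integrating against $\Lambda$, the left side is exactly $W_2(\nu,\mu_t)^2$ by the choice of the $\big(w,e_t(x,y)\big)$-marginal. On the right, the term $\int|x-y|^2\,d\Lambda=W_2(\mu_0,\mu_1)^2$ by optimality of $\pi$, and it carries the favorable minus sign; meanwhile $\int|w-x|^2\,d\Lambda\geq W_2(\nu,\mu_0)^2$ and $\int|w-y|^2\,d\Lambda\geq W_2(\nu,\mu_1)^2$, since the corresponding marginals of $\Lambda$ are admissible couplings, and these carry the nonnegative weights $1-t$ and $t$. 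Combining the three estimates yields $(\star)$.

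The main obstacle I anticipate is the bookkeeping in the gluing step: one must ensure that $\Lambda$ simultaneously reproduces the optimal plan $\pi$ on $(x,y)$ and the optimal coupling $\alpha$ between $\nu$ and $\mu_t$ after pushforward by $e_t$, so that both $\int|x-y|^2\,d\Lambda=W_2(\mu_0,\mu_1)^2$ and $W_2(\nu,\mu_t)^2=\int|w-e_t(x,y)|^2\,d\Lambda$ hold exactly; this relies on disintegration of measures and on $e_t$ being the correct interpolation map. The remaining inputs — existence of optimal plans, the displacement-interpolation description of geodesics, and the Alexandrov equivalence between $(\star)$ and the angle monotonicity of Definition~\ref{defn-alex} — are standard, and I would simply cite them.
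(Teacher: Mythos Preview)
Your proposal is correct: the route via displacement interpolation, the gluing lemma, and the pointwise Euclidean identity
\[
\bigl|\,w-\bigl((1-t)x+ty\bigr)\,\bigr|^2 = (1-t)\,|w-x|^2 + t\,|w-y|^2 - t(1-t)\,|x-y|^2
\]
is the standard proof of the point-side comparison inequality $(\star)$, and invoking the equivalence between $(\star)$ and the angle-monotonicity form of Definition~\ref{defn-alex} is legitimate.

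However, the paper does not prove this lemma at all. It is stated purely as a citation of \citet{sturm2006geometry}, Proposition~2.10, and is then used as a black box in the proof of Theorem~\ref{thm-cos-law}. So your approach differs from the paper's in that you supply a self-contained argument where the paper merely records a reference. What your route buys is transparency: the reader sees that the non-negative curvature of $W^2(\mathbb R^n)$ comes directly from the \emph{flatness} of $\mathbb R^n$ (the identity above is an equality, not an inequality), which is exactly the mechanism Sturm exploits. The paper's choice to cite buys brevity, appropriate for an NLP venue, but no geometric insight. Your bookkeeping worry about the gluing step is well placed but resolvable: disintegrating both $\alpha$ and $\pi$ over the common marginal $\mu_t$ and taking the product fiberwise is the standard construction, and it does give the required marginals.
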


\begin{proof}[Proof of Theorem 2]
Let $X=W^2(\mathbb R^n)$ and $|\cdot\,,\cdot|$ be the $L^2$ Wasserstein distance. For any $x,y,z \in X$, we denote by $\gamma_{xy}$ ($\gamma_{zx}$) the geodesic connecting $x$ and $y$ (resp. $z$ and $x$). By the above Lemma, $X$ has non-negative curvature in the sense of Alexandrov, hence according to Definition \ref{defn-alex}, one can define the angle between $\gamma_{xy}$ and $\gamma_{zx}$ at $x$, denoted by $\theta$, and we have
$$\theta \geq {\widetilde\angle} yxz :=\arccos \left(  \frac{|x,y|^2 + |z,x|^2 - |y,z|^2}{2 \cdot |x,y| \cdot |z,x|} \right), $$
which implies 
$$
\cos \theta \leq \frac{|x,y|^2 + |z,x|^2 - |y,z|^2}{2 \cdot |x,y| \cdot |z,x|}.
$$
Equivalently,
$$
|y,z|^2 \leq |x,y|^2 + |z,x|^2 -2  |x,y| \cdot |z,x| \cdot \cos \theta.
$$
Hence, we complete the proof.
\end{proof}



\begin{proof}[Proof of Theorem 3]
We derive from the definition of $\ell_{WD}$  and the triangle inequality for the $L^2$ Wasserstein distance that for any $\Theta, \Theta'$,
\begin{equation*}
\begin{aligned}
\| \ell_{WD}(\mathbf{u}_\Theta,\mathbf{v}) - \ell_{WD}(\mathbf{u}_{\Theta'},\mathbf{v}) \| 
& \leq \ell_{WD}(\mathbf{u}_{\Theta'}, \mathbf{u}_{\Theta})  \\ 
&= \ell^{1/2}_{WMD}(\mathbf{u}_{\Theta'}, \mathbf{u}_{\Theta})  \\
&\leq  \left( \sum_{i,j}T_{i,j} d_{i,j} \right)^{1/2}
\end{aligned}
\end{equation*}
where $T_{i,j}$ satisfies
\begin{equation*}
\begin{aligned}
\sum_{j} T_{i,j} &= p_{u_\Theta,i} \quad \forall i, \quad \sum_{i} T_{i,j} &= p_{u_{\Theta'},j} \quad \forall j .
\end{aligned}
\end{equation*}
Take $T_{i,j}=\delta_{ij} \cdot p_{u_\Theta,i}$. According to the assumption that  $\mathbf u_\Theta$ is Lipschitz continuous with respect to the parameters $\Theta$, we have
$$
d_{i,i} = \| u_{\Theta, i} - u_{\Theta', i}\|^2 \leq L \cdot \|\Theta' - \Theta\|^2
$$
for some constant $L>0$.
Hence, we get that
\begin{equation*}
\begin{aligned}
\left( \sum_{i,j}T_{i,j} d_{i,j} \right)^{1/2} &\leq \left( \sum_{i}T_{i, i} \cdot L \cdot \|\Theta' - \Theta\|^2 \right)^{1/2} \\
& = \left( \sum_{i}T_{i, i} \right)^{1/2} \cdot L^{1/2}\cdot  \|\Theta' - \Theta\| \\
& = L^{1/2}\cdot  \|\Theta' - \Theta\| .
\end{aligned}
\end{equation*}
Finally, we got 
\begin{equation*}
\| \ell_{WD}(\mathbf{u}_\Theta,\mathbf{v}) - \ell_{WD}(\mathbf{u}_{\Theta'},\mathbf{v}) \| \leq L^{1/2}\cdot  \|\Theta' - \Theta\| .
\end{equation*}
Hence, we complete the proof.
\end{proof}

\section{Experimental Results of Transformer}
We also evaluate our method using the Transformer architecture on two translation tasks. To prevent the model from over-fitting, we use a Transformer model with a 2-layer encoder and a 2-layer decoder. Other hyper-parameters are almost the same as in \citet{vaswani2017attention}, except for the optimizer. In our experiment, we use SGD to train the model, instead of Adam~\cite{vaswani2017attention}, since our approach is derived under SGD. Results are shown in Table \ref{tab:transformer}, which are consistent with the observations from the LSTM model.
We hope that our approach and theoretical analysis can be extended to the Adam framework~\cite{kingma2014adam, chen2018convergence, reddi2019convergence} in the future.


\begin{table*}[!htbp]
\setlength{\abovecaptionskip}{0.2cm}
\setlength{\belowcaptionskip}{-0.55cm}
\begin{center}
\resizebox{0.45 \columnwidth}{!}{
\begin{tabular}{lcccc}\toprule
 \hline
\textbf{Model} &  \textbf{De$\Rightarrow$En} & \textbf{En$\Rightarrow$De} & \textbf{Vi$\Rightarrow$En} & \textbf{En$\Rightarrow$Vi}  \\\hline
\textsc{CE}     & 29.18   & 24.36 & 25.04   & 26.02
 \\
\textsc{CE+MLM}   &  29.20  & 24.40 & \underline{25.68}  & 25.97 \\
\textsc{CE+BT}     & \textbf{30.01}  & \textbf{25.45} & \textbf{25.77}  & \textbf{27.62} \\
\textsc{CE+OURS}    & \underline{29.25}   & \underline{24.62}  & 25.49  & \underline{26.84} \\\hline
\textsc{WD}  &   28.60  & 24.38 & 24.79 &  \underline{26.43} 
  \\
\textsc{WD+MLM}  & \underline{29.02}  & 24.49 & \underline{25.08} &  26.13  \\
\textsc{WD+BT}   & 28.92   & \underline{24.82}   & 24.88  &  26.38\\
\textsc{WD+Ours}  & \textbf{29.51}   & \textbf{24.96} & \textbf{25.11}  & \textbf{26.66} 
\\\hline
\bottomrule
\end{tabular}
}
\end{center}
\caption{\label{tab:transformer} BLEU scores on two translation datasets using the Transformer model.  CE: Cross-Entropy loss; WD: $L^2$ Wasserstein distance. The best results are in \textbf{bold}, and the second-best results are in \underline{underline}.}
\end{table*}

\end{document}